\DeclareFontFamily{OT1}{pzc}{}
\DeclareFontShape{OT1}{pzc}{m}{it}{<-> s * [1.10] pzcmi7t}{}
\DeclareMathAlphabet{\mathpzc}{OT1}{pzc}{m}{it}
\newtheorem{theorem}{Theorem}[section]
\newtheorem{lemma}[theorem]{Lemma}
\newtheorem{proposition}[theorem]{Proposition}
\newtheorem{remark}[theorem]{Remark}
\providecommand{\R}{\mathbb{R}}
\providecommand{\SO}{\mathbf{SO}}
\providecommand{\SE}{\mathbf{SE}}
\providecommand{\SOT}{\mathbf{SOT}}
\providecommand{\grpG}{\mathbf{G}}
\providecommand{\gothg}{\mathfrak{g}}
\providecommand{\gothX}{\mathfrak{X}} % as in X(M)
\providecommand{\so}{\mathfrak{so}}
\providecommand{\se}{\mathfrak{se}}
\providecommand{\sot}{\mathfrak{sot}}
\providecommand{\Sph}{\mathrm{S}}
\providecommand{\calM}{\mathcal{M}}
\providecommand{\calN}{\mathcal{N}}
\providecommand{\calU}{\mathcal{U}}
\providecommand{\vecL}{\mathbb{L}}
\providecommand{\Sym}{\mathbb{S}} % symmetric matrix $\Sym(n)$
\providecommand{\tT}{\mathrm{T}} % tangent objects eg $\tT \calM$
\providecommand{\eb}{\mathbf{e}} % \providecommand{\eb}{\vec{e}}
\DeclareMathOperator{\Ad}{Ad}
\DeclareMathOperator{\ad}{ad}
\providecommand{\id}{\mathrm{id}} % identity map
\providecommand{\td}{\mathrm{d}}
\providecommand{\tD}{\mathrm{D}}
\providecommand{\ddt}{\frac{\td}{\td t}}
\providecommand{\mr}[1]{\mathring{#1}} % reference element.
\providecommand{\scirc}{%
    \hbox{\fontfamily{\rmdefault}\fontsize{0.4\dimexpr(\f@size pt)}{0}\selectfont{\raisebox{-0.52ex}[0ex][-0.52ex]{$\circ$}}}}
\providecommand{\ucirc}{%
    \hbox{\fontfamily{\rmdefault}\fontsize{0.4\dimexpr(\f@size pt)}{0}\selectfont{\raisebox{0.0ex}[0ex][-0.52ex]{$\circ$}}}}
\mathchardef\mhyphen="2D
\providecommand{\etal}{\textit{et al.~}}
\providecommand{\inertM}{\calM^\text{\tiny I}}
\providecommand{\inertf}{f^\text{\tiny I}}
\providecommand{\visualM}{\calM^\text{\tiny V}}
\providecommand{\visualN}{\mathcal{N}_n^\text{\tiny  V}(3)}
\providecommand{\vinsG}{\mathbf{SLAM}^\text{\tiny VI}_n(3)}
\providecommand{\vinsT}{\mathcal{T}_n^\text{\tiny VI}(3)}
\providecommand{\vis}{\text{\tiny V}}
\newcommand{\imu}{\text{\tiny B}}
\newcommand{\algorithimRotate}{60}
\newcommand{\hlresult}{\textbf}
\newcommand{\biographyPhotoWidth}{2.5cm}
\begin{document}

%%%%%%%%%%%%%%%%%%%%%%%%%%%%%%%%%%%%%

\title{EqVIO: An Equivariant Filter for \\ Visual Inertial Odometry}
\headertitle{EqVIO: An Equivariant Filter for Visual Inertial Odometry}

% \author{Pieter~van~Goor,~\IEEEmembership{Student~Member,~IEEE,}
%         and~Robert~Mahony,~\IEEEmembership{Senior~Fellow,~IEEE,}
\thanks{P. van Goor and R. Mahony were with the Systems Theory and Robotics group at the School of Engineering, Australian National University.
e-mail: \texttt{first\_name.last\_name@anu.edu.au}.}% <-this % stops a space
% \thanks{Manuscript received ???; revised ???.}
% }

\author{
\href{https://orcid.org/0000-0003-4391-7014}{\includegraphics[scale=0.06]{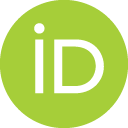}\hspace{1mm}
Pieter van Goor}
\\
    Systems Theory and Robotics Group \\
    Australian Centre for Robotic Vision \\
	Australian National University \\
    ACT, 2601, Australia \\
    \texttt{Pieter.vanGoor@anu.edu.au} \\
	\And	\href{https://orcid.org/0000-0002-7803-2868}{\includegraphics[scale=0.06]{orcid.png}\hspace{1mm}
    Robert Mahony}
\\
    Systems Theory and Robotics Group \\
    Australian Centre for Robotic Vision \\
	Australian National University \\
    ACT, 2601, Australia \\
	\texttt{Robert.Mahony@anu.edu.au} \\
}

% The paper headers
% \markboth{Submitted to IEEE Transactions on Robotics}%
% {Shell \MakeLowercase{\textit{et al.}}: Bare Demo of IEEEtran.cls for IEEE Journals}

% make the title area
\maketitle

%%%%%%%%%%%%%%%%%%%%%%%%%%%%%%%%%%%%%

% As a general rule, do not put math, special symbols or citations
% in the abstract or keywords.
\begin{abstract}
Visual-Inertial Odometry (VIO) is the problem of estimating a robot's trajectory by combining information from an inertial measurement unit (IMU) and a camera, and is of great interest to the robotics community.
This paper develops a novel Lie group symmetry for the VIO problem and applies the recently proposed equivariant filter.
The proposed symmetry is compatible with the invariance of the VIO reference frame, leading to improved filter consistency.
The bias-free IMU dynamics are group-affine, ensuring that filter linearisation errors depend only on the bias estimation error and measurement noise.
Furthermore, visual measurements are equivariant with respect to the symmetry, enabling the application of the higher-order equivariant output approximation to reduce approximation error in the filter update equation.
As a result, the equivariant filter (EqF) based on this Lie group is a consistent estimator for VIO with lower linearisation error in the propagation of state dynamics and a higher order equivariant output approximation than standard formulations.
Experimental results on the popular \textit{EuRoC} and \textit{UZH FPV} datasets demonstrate that the proposed system outperforms other state-of-the-art VIO algorithms in terms of both speed and accuracy.
\end{abstract}

% Note that keywords are not normally used for peerreview papers.
% \begin{IEEEkeywords}
% Visual Inertial Odometry, SLAM, Equivariant Filter
% \end{IEEEkeywords}

%%%%%%%%%%%%%%%%%%%%%%%%%%%%%%%%%%%%%%%%%%%%%%%%%%
\section{Introduction}
%%%%%%%%%%%%%%%%%%%%%%%%%%%%%%%%%%%%%%%%%%%%%%%%%%

% You must have at least 2 lines in the paragraph with the drop letter
% (should never be an issue)

Visual Inertial Odometry (VIO) is the problem of determining the trajectory of a robot from a combination of a camera and an inertial measurement unit (IMU).
This problem is of enduring interest to the robotics community due to the ubiquity of systems where such sensors are available, including smartphones, VR/AR headsets, racing drones, and more.
Solutions to the `standard' variant of VIO, where only a single monocular camera is available, are of particular importance due their wide range of applications.
Additionally, the IMU complements the visual data from a monocular camera by providing high-rate motion predictions and making the scale of the system observable, thereby overcoming a key weakness of the camera-only Visual Odometry (VO) problem.

State-of-the-art solutions for VIO are based on either the extended Kalman filter (EKF) or sliding-window optimisation.
EKF-based solutions, such as ROVIO \cite{2017_bloesch_IteratedExtendedKalman}, OpenVINS \cite{2020_geneva_OpenVINSResearchPlatform}, and MSCKF \cite{2007_mourikis_MultistateConstraintKalman}, are generally less accurate than optimisation-based methods but have lower compute and memory requirements and tend to be used in highly dynamic embedded systems applications such as VR headsets, smartphone applications, aerial vehicles, etc.
On the other hand, optimisation-based methods, such as VINS-mono \cite{2018_qin_VinsmonoRobustVersatile} and OKVIS \cite{2015_leutenegger_KeyframebasedVisualInertial}, tend to be more accurate than EKF-based methods but require significant compute and memory resources making them appropriate in applications such as automotive, larger robotic systems, etc.
%Consequently, EKF-based methods are generally preferred when computational resources are constrained, such as the on-board computers on flying robots and smartphones.
The main cause of loss of accuracy for EKF methods relative to optimisation-based methods is associated with the accumulation of linearisation errors.
Recent advances in the theory of equivariant systems \cite{2018_barrau_InvariantKalmanFiltering,2020_mahony_EquivariantSystemsTheory} have shown that exploiting the Lie group symmetries of a system can lead to improved filter designs such as the invariant EKF (IEKF) and the Equivariant Filter (EqF) \cite{2023_vangoor_EquivariantFilterEqF} that minimize linearisation error.

In this paper, we develop a novel Lie group for the VIO problem, and exploit this symmetry in the implementation of an \emph{equivariance-based} VIO algorithm we term \emph{EqVIO}.
Unlike EKF designs, the EqF back-end of our proposed system has out-of-the-box consistency properties, exact linearisation of the bias-free IMU error dynamics, and a better (higher-order) linearisation of the visual measurement function.
The advantages of these properties are made clear in the experimental results, where EqVIO outperforms state-of-the-art EKF- and optimisation-based algorithms in terms of both the accuracy of the estimated trajectory and the speed of processing each frame.

The key contributions of this paper are as follows.
\begin{itemize}
\item A novel Lie group, the \emph{VI-SLAM group}, is developed for the VIO problem.
This Lie group symmetry is compatible with the reference frame invariance of VIO.
Additionally, in contrast to the symmetries explored in prior literature \cite{2018_brossard_UnscentedKalmanFilter}, the visual measurement function of VIO is equivariant with respect to the VI-SLAM group.

\item The advantages of the VI-SLAM group are clearly demonstrated.
Its $\SE_2(3)$ component is shown to eliminate linearisation error in the bias-free IMU error dynamics; the only error in the propagation of the full IMU error dynamics is due to the measurement noise and bias estimation error.
The novel landmark symmetry, based on $\SOT(3)$ components, eliminates second-order approximation error of the visual measurement function by exploiting equivariance.
It also improves on and explains the well-known advantages of the inverse-depth parametrisation of landmarks that is core to modern filter performance in VIO algorithms.

\item A novel VIO algorithm, \emph{EqVIO}, is proposed that combines a simple feature-tracking front-end and basic outlier rejection with an equivariant filter implementation.
EqVIO is shown to outperform state-of-the-art VIO algorithms in both speed and accuracy on both the popular EuRoC \cite{2016_burri_EuRoCMicroAerial} and the challenging UZH FPV \cite{2019_delmerico_AreWeReady} datasets.
Our implementation of EqVIO is open source and publicly available under a GNU GPLv3 licence%
\footnote{\url{https://github.com/pvangoor/eqvio}}.
\end{itemize}

The proposed filter is in the class of invariant and equivariant observer designs.
Analogous to previous IEKF solutions, it exploits the same $\SE_2(3)$ symmetry for estimation of the IMU position, attitude and velocity, and the provided estimate is statistically consistent.
This is unsurprising since the EqF specialises to the IEFK when the system state can be identified with the symmetry Lie group and specific local coordinates are chosen \cite[Appendix B]{2023_vangoor_EquivariantFilterEqF}.
However, the EqF framework used in this paper enables the application of $\SOT(3)$ as a symmetry to landmark estimation, which provides a powerful third-order approximation of the visual measurements, and cannot be applied in an IEKF setting.

This work is an extension of \cite{2021_vangoor_EquivariantFilterVisual}, and improves over the previous version by; providing online calibration of IMU-camera extrinsics; including the estimation of robot pose directly in the EqF rather than as a separate bundle adjustment step; detailing the effect of symmetry on the linearisation of IMU error dynamics and visual measurements; and greatly expanding the experimental results to include more thorough comparisons with other state-of-the-art algorithms and demonstrate filter consistency.

%%%%%%%%%%%%%%%%%%%%%%%%%%%%%%%%%%%%%%%%%%%%%%%%%%
\section{Related Work}
%%%%%%%%%%%%%%%%%%%%%%%%%%%%%%%%%%%%%%%%%%%%%%%%%%

\subsection{Visual-Inertial Odometry}

%The problem of (monocular) VIO is that of estimating a robot's trajectory where the only available sensors are images from a standard camera, and gyroscope and accelerometer measurements from an Inertial Measurement Unit (IMU).
Although most VIO solutions rely on constructing a map of the robot's local environment, the accuracy of this map is not considered important in evaluating system performance, in contrast to traditional Simultaneous Localisation and Mapping (SLAM).
Some of the first systems to focus on the problem of trajectory estimation from stereo or monocular vision data, as distinct from general SLAM, were proposed in \cite{2004_nister_VisualOdometry,2006_nister_VisualOdometryGround,2007_klein_ParallelTrackingMapping,2007_davison_MonoSLAMRealtimeSingle}.
An important milestone in the development of VIO systems is the Multi-State Constrained Kalman Filter (MSCKF) \cite{2007_mourikis_MultistateConstraintKalman}, which approached the problem by applying a fixed-lag EKF, and, notably, eliminating the estimation of landmark positions from the filter process.
This modification resulted in an efficient algorithm for VIO with only linear complexity in the number of landmarks considered.
In \cite{2010_konolige_LargescaleVisualOdometry}, Konolige \etal considered a system equipped with a stereo camera and an IMU, and employed bundle adjustment to solve the VIO problem.
They improved their results by using specialised image features, and discussed the challenge of using traditional image features in self-similar outdoor environments.

Since 2015, the monocular VIO problem specifically has seen substantial interest in the robotics community.
Bloesch \etal \cite{2017_bloesch_IteratedExtendedKalman} developed \emph{ROVIO}: a VIO algorithm that mixes an arbitrary number of cameras with IMU measurements in an iterated EKF framework.
In contrast to the majority of EKF-based VIO systems, ROVIO used a `direct' error formulation; that is, rather than obtaining feature coordinates from an image, the image pixel values were considered directly in the system model of the EKF.
In parallel, Leutenegger \etal \cite{2015_leutenegger_KeyframebasedVisualInertial} developed \emph{OKVIS}, which solves monocular and stereo VIO by using non-linear optimisation on a sliding window of `keyframes'.
Semi-direct Visual Odometry (SVO) \cite{2017_forster_SVOSemidirectVisual} used a sparse set of image patches in frame-to-frame optimisation to greatly reduce the presence of outliers and to solve VO at very high speeds.
While SVO is not strictly a VIO solution, as it does not use an IMU, Delmerico and Scaramuzza \cite{2018_delmerico_BenchmarkComparisonMonocular} propose two methods to combine the output from SVO with an IMU in a filtering or smoothing framework.
Recently, Qin \etal \cite{2018_qin_VinsmonoRobustVersatile} combined a range of modern SLAM techniques to develop VINS-MONO, which performs tightly coupled keyframe optimisation-based VIO with efficient loop closure, and achieves competitive accuracy on popular datasets.
Delmerico and Scaramuzza \cite{2018_delmerico_BenchmarkComparisonMonocular} benchmarked a range of state-of-the-art VIO systems.
They showed that, generally, the VIO systems optimised for speed and CPU usage suffered from relatively low accuracy, and that many of the popular systems tend to fail on challenging datasets or on limited hardware.
OpenVINS \cite{2020_geneva_OpenVINSResearchPlatform} is another recent VIO system, which mixes the MSCKF \cite{2007_mourikis_MultistateConstraintKalman} with a traditional EKF and achieves state-of-the-art performance.
The primary contribution of OpenVINS was to provide a well-documented open platform for EKF-based VIO research.

In summary, the literature on VIO is split between EKF-based and optimisation-based algorithms.
EKF-based algorithms are preferred for their efficiency when computational resources are constrained, but suffer from linearisation error that accumulates and degrades performance over time.

\subsection{Equivariant Observers for VIO}

Equivariant observers are state estimators that exploit available Lie group symmetries of a given problem.
Two key examples include the invariant EKF (IEKF) \cite{2018_barrau_InvariantKalmanFiltering} and the Equivariant Filter (EqF) \cite{2023_vangoor_EquivariantFilterEqF}.
The success of equivariant observers in other robotics problems has led several authors to investigate their application to inertial navigation, SLAM, and VIO.
In \cite{2014_barrau_InvariantParticleFiltering}, Barrau and Bonnabel proposed the extended Special Euclidean group $\SE_2(3)$, and show that it can be used to obtain an exact linearisation of IMU error dynamics when the biases are known.
This represents a clear improvement over the common representation of IMU states in the SLAM and VIO literature, which uses an on-manifold EKF \cite{2013_hertzberg_IntegratingGenericSensor} to obtain a minimal representation of rotation error between quaternions that is analogous to the well-known multiplicative EKF (MEKF) \cite{2003_markley_AttitudeErrorRepresentations}.

In the earliest work examining symmetry properties of the SLAM problem, Barrau and Bonnabel \cite{2016_barrau_EKFSLAMAlgorithmConsistency} proposed a novel class of Lie groups, $\SE_n(m)$, and showed that this is a symmetry suitable for the classical SLAM problem.
They further showed that this symmetry is compatible with the reference frame invariance of SLAM, and that the resulting IEKF consequently overcomes the well-known consistency issues of EKF-based SLAM \cite{2006_bailey_ConsistencyEKFSLAMAlgorithm}.
In \cite{2017_zhang_ConvergenceConsistencyAnalysis}, Zhang \etal performed an observability analysis of the IEKF for SLAM, and compared its performance to a range of other EKFs for SLAM in simulation.
Wu \etal \cite{2017_wu_InvariantEKFVINSAlgorithm} then combined $\SE_n(m)$ with the MSCKF concept of \cite{2007_mourikis_MultistateConstraintKalman} to propose an invariant MSCKF for VIO, which they showed to be a consistent filter.
They contrasted this to the original MSCKF, which exhibited growing inconsistency in a series of Monte Carlo simulation trials.
Brossard \etal \cite{2018_brossard_InvariantKalmanFiltering} derived an invariant unscented Kalman filter (IUKF) for monocular SLAM using the Lie group proposed in \cite{2016_barrau_EKFSLAMAlgorithmConsistency}, and outperformed other invariant filters for VIO.
A number of other works \cite{2018_heo_ConsistentEKFBasedVisualInertial,
2018_heo_ConsistentEKFBasedVisualInertiala,
2018_brossard_UnscentedKalmanFilter}
have also explored applying variants of the IEKF to VIO in an MSCKF framework.
Recently, Yang \etal \cite{2022_yang_DecoupledRightInvariant} coupled the $\SE_2(3)$ symmetry with the `first-estimates Jacobian' technique of \cite{2008_huang_AnalysisImprovementConsistency}.
They demonstrated improved accuracy and consistency over other filter-based algorithms implemented on the OpenVINS platform \cite{2020_geneva_OpenVINSResearchPlatform}, evaluated in simulation and on the TUM-VI dataset \cite{2018_schubert_TUMVIBenchmark}.

Recently, the authors \cite{2021_vangoor_ConstructiveObserverDesign} developed a novel Lie group for visual SLAM under which the visual measurements of landmarks are equivariant, unlike in the previously explored $\SE_m(n)$ symmetry.
The IEKF cannot be directly applied using this symmetry as the Lie group is of a higher dimension than the underlying state space.
This issue is overcome by the recent EqF \cite{2023_vangoor_EquivariantFilterEqF}, which additionally provides a framework for exploiting the equivariance of a system output function to reduce linearisation error.
To the best of our knowledge, there has been no equivariant observer applied to VIO with a symmetry that is compatible with visual measurements prior to this paper and its previous version \cite{2021_vangoor_EquivariantFilterVisual}.

\subsection{Parametrisations of VIO Landmarks}

The representation of the robot pose and environment map are known to have a significant impact on the accuracy of EKF-based SLAM and VIO approaches.
In \cite{2004_castellanos_LimitsConsistencyEKFbased}, Castellanos \etal identified the inconsistency of the EKF for SLAM when using a straightforward inertial-frame representation of landmarks.
They proposed to use a Euclidean body-fixed representation of landmarks instead, and showed that this improved the consistency of the EKF.
Another key work in understanding the impact of landmark representations in EKF SLAM is by Civera \etal \cite{2008_civera_InverseDepthParametrization}, who proposed the earliest version of the \emph{inverse-depth} parametrisation of landmarks.
The key advantage in this representation is that it is able to represent a large uncertainty in the distance of a landmark from the robot's initial position using a Gaussian distribution.
Sol\`{a} \cite{2010_sola_ConsistencyMonocularEKFSLAM} investigated a variety of landmark parametrisations and proposed an anchored homogeneous point representation similar to the inverse-depth parametrisation.
He showed that this representation provided more consistent estimation in a SLAM system compared to inverse-depth in a series of simulation experiments.
However, further comparisons by Sol\`{a} \etal \cite{2011_sola_ImpactLandmarkParametrization} showed similar performance between the inverse-depth and anchored homogeneous point representations, and concluded that the inverse-depth parametrisation is preferred for its lower computational cost.
Li \etal \cite{2013_li_HighprecisionConsistentEKFbased} also investigated the anchored homogeneous point and inverse-depth representations, showing that the first yielded better filter consistency while the second resulted in better absolute filter performance.
A more recent version of the inverse-depth parametrisation is presented by Bloesch \etal \cite{2017_bloesch_IteratedExtendedKalman}, who adapted the on-manifold EKF approach developed in \cite{2013_hertzberg_IntegratingGenericSensor} to obtain a minimal representation of unit vectors on the sphere.
While the existing inverse-depth parametrisations have been empirically shown to improve performance over the Euclidean parametrisation, the cause of this improvement is not well characterised.
The discovery of symmetries for visual measurements \cite{2021_vangoor_ConstructiveObserverDesign} motivates the development of a new compatible parametrisation.

\subsection{Consistency of Filter-Based VIO}

Stochastic filters for SLAM and VIO, such as the EKF, are said to be consistent if the probability distribution they report matches the true system statistics.
The straightforward EKF solution for SLAM was shown to be inconsistent in experiments carried out in \cite{2004_castellanos_LimitsConsistencyEKFbased}.
A number of authors have developed advanced modifications to the standard EKF to reduce inconsistency, and these techniques continue to be used in state-of-the-art VIO systems such as OpenVINS \cite{2020_geneva_OpenVINSResearchPlatform}.
However, significant progress has recently been made by new VIO solutions exploiting Lie group symmetries in their designs to circumvent the consistency problem entirely.
Huang \etal \cite{2008_huang_AnalysisImprovementConsistency} demonstrated that the inconsistency of standard EKF SLAM is associated with a mismatch between the observability of the linearised error-state system and the observability of the true system; the true system has a six-dimensional unobservable subspace corresponding to transformations of the reference frame, while the linearised system does not.
They proposed a first estimates Jacobian (FEJ) which overcomes this observability issue with only a minor loss of accuracy in the EKF.
Li and Mourikis \cite{2013_li_HighprecisionConsistentEKFbased} applied the FEJ concept to an MSCKF design and showed that this improved consistency over a standard MSCKF.
In \cite{2014_hesch_ConsistencyAnalysisImprovement}, Hesch \etal studied the observability of VIO specifically, and identified the same consistency issues in standard EKF solutions.
They developed an observability constrainted (OC) EKF for VIO that directly enforces the unobservable directions of the system in the update step of the EKF, and showed that this significantly reduces filter inconsistency.
Barrau and Bonnabel \cite{2016_barrau_EKFSLAMAlgorithmConsistency} proposed a novel Lie group $\SE_{n+1}(3)$, and show that an IEKF design for SLAM with this symmetry provides a consistent estimator.
They showed that the linearised system admits the same unobservable directions as the true system, due to the compatibility of the proposed Lie group and the reference frame invariance of SLAM.
This same symmetry was used by Wu \etal \cite{2017_wu_InvariantEKFVINSAlgorithm} to develop an IEKF for VIO.
This was also shown to provide a consistent filter for VIO, as the symmetry respects the invariance of the VIO problem to changes in the reference frame yaw and position.
Recently, Huai and Huang \cite{2022_huai_RobocentricVisualInertial} formulated the VIO problem with respect to a non-global moving frame, and showed that an algorithm based on this approach does not suffer from the unobservability mismatch between the true and linearised systems.
Finally, Yang \etal \cite{2022_yang_DecoupledRightInvariant} applied an IEKF to VIO using the FEJ technique and showed that it outperforms a standard FEJ-EKF design.
Recent developments have studied the VIO problem from the perspective of Lie group symmetries; they show that designing a filter that exploits these symmetries can overcome the observability mismatch of standard EKF designs and yield a consistent solution for VIO.
It is clear that a key advantage of algorithms based on invariant and equivariant principles is strong consistency properties.

%%%%%%%%%%%%%%%%%%%%%%%%%%%%%%%%%%%%%%%%%%%%%%%%%%
\section{Mathematical Preliminaries}
%%%%%%%%%%%%%%%%%%%%%%%%%%%%%%%%%%%%%%%%%%%%%%%%%%

For a comprehensive introduction to smooth manifolds and Lie groups, the authors recommend \cite{2013_lee_IntroductionSmoothManifolds}.

%%%%%%%%%%%%%%%%%%%%%%%%%%%%%
\subsection{Smooth Manifolds}
%%%%%%%%%%%%%%%%%%%%%%%%%%%%%

Given a smooth manifold $\calM$, denote the tangent space at $\xi \in \calM$ by $\tT_\xi \calM$.
The tangent bundle of $\calM$ is written $\tT \calM$.
If $f : \calM \to \calN$ is a differentiable function between smooth manifolds, the differential of $f$ with respect to $\zeta$ at a point $\xi \in \calM$ is
\begin{align*}
    \tD_\zeta |_\xi f(\zeta) : \tT_\xi \calM &\to \tT_{f(\xi)} \calN, \\
     u &\mapsto \tD_\zeta |_\xi f(\zeta)[u].
\end{align*}
When the base point is left unspecified, the differential of $f$ is a map between tangent bundles $\tD f : \tT \calM \to \tT \calN$.

Given $f : \calM \to \calN$ and $g : \calN \to \calN'$, write the composition $f \circ g : \calM \to \calN'$.
When $f$ and $g$ are linear maps we may also write $f \cdot g$.

Denote the $n$-sphere as
\begin{align*}
    \Sph^n := \{ x \in \R^n \mid \vert x \vert = 1 \}.
\end{align*}
The 1-sphere $\Sph^1$ is simply the circle, and forms a Lie group under addition of angles.
For any $n$, the projection onto the sphere $\pi_{\Sph^n} : \R^n \setminus \{0\} \to \Sph^n$ is defined to be
\begin{align}
    \pi_{\Sph^n}(x) := \frac{x}{\vert x \vert}.
    \label{eq:sphere-projector}
\end{align}

%%%%%%%%%%%%%%%%%%%%%%%%%%%%%
\subsection{Lie Group Theory}
%%%%%%%%%%%%%%%%%%%%%%%%%%%%%

For a Lie group $\grpG$, we write the Lie algebra as $\gothg$.
The identity is denoted $\id \in \grpG$, and Left and right translation are written
\begin{align*}
    L_X(Y) &:= X Y, &
    R_X(Y) &:= Y X,
\end{align*}
respectively.
The exponential map is written $\exp$, and its inverse (when defined) is the logarithmic map $\log$.
The Adjoint maps $\Ad : \grpG \times \gothg \to \gothg$ and $\ad : \gothg \times \gothg \to \gothg$ are defined by
\begin{align*}
    \Ad_X U &= \tD L_X \tD R_{X^{-1}} U, &
    \ad_U V &= [U, V].
\end{align*}
where $[\cdot,\cdot]$ is the Lie-bracket on $\gothg$.
The \emph{wedge} and \emph{vee} operators are linear isomorphisms
\begin{align*}
    \cdot^\wedge &: \R^{\dim \gothg} \to \gothg, &
    \cdot^\vee &: \gothg \to \R^{\dim \gothg},
\end{align*}
satisfying $(u^\vee)^\wedge = u$ for all $u \in \gothg$.
When it is not clear from context, we will use a subscript to indicate which Lie group or algebra a particular operation is associated with.

A right group action of a Lie group $\grpG$ on smooth manifold $\calM$ is a smooth map $\phi : \grpG \times \calM \to \calM$ satisfying
\begin{align}
    \label{eq:group_action_compatible}
    \phi(XY, \xi) &= \phi(Y, \phi(X, \xi)), \\
    \label{eq:group_action_identity}
    \phi(\id, \xi) &= \xi,
\end{align}
for all $X, Y \in \grpG$ and $\xi \in \calM$.
A left group action is defined similarly, except that the compatibility condition \eqref{eq:group_action_compatible} is reversed.

A product Lie group is formed from the combination of multiple existing Lie groups.
If $\grpG_1, ..., \grpG_n$ are Lie groups, then the product Lie group is
\begin{align} \label{eq:product_lie_group}
    \grpG_1 \times \cdots \times \grpG_n := \{ (X_1,...,X_n) \mid X_i \in \grpG_i \},
\end{align}
with multiplication, identity, and inverse given by
\begin{align*}
    (X_1,...,X_n)(Y_1,...,Y_n) &:= (X_1 Y_1,...,X_n Y_n), \\
    \id_{\grpG_1 \times \cdots \times \grpG_n} &:= (\id_{\grpG_1}, ..., \id_{\grpG_n}), \\
    (X_1,...,X_n)^{-1} &:= (X_1^{-1},...,X_n^{-1}).
\end{align*}

%%%%%%%%%%%%%%%%%%%%%%%%%%%%%
\subsection{Important Lie Groups}
%%%%%%%%%%%%%%%%%%%%%%%%%%%%%

The \emph{special orthogonal group} is the set of 3D rotations,
\begin{align*}
    \SO(3) &:= \{ R \in \R^{3 \times 3} \mid R^\top R = I_3, \; \det(R) = 1 \}, \\
    \so(3) &:= \{ \omega^\times \mid \omega \in \R^3 \}, \\
    \omega^\times &:= \begin{pmatrix}
        0 & -\omega_3 & \omega_2 \\
        \omega_3 & 0 & -\omega_1 \\
        -\omega_2 & \omega_1 & 0
    \end{pmatrix}.
\end{align*}
Note that $\omega^\times$ is the unique $3\times 3$ matrix satisfying $\omega^\times v = \omega \times v$ for all $v \in \R^3$.
For any $\omega \in \R^3$, define $(\omega)_{\so(3)}^\wedge := \omega^\times$.

The \emph{scaled orthogonal transforms} are
\begin{align*}
    \SOT(3) &:= \left\{ \begin{pmatrix}
        R & 0 \\ 0 & c
    \end{pmatrix} \; \middle\vert \; R \in \SO(3), \; c > 0 \right\}, \\
    \SOT(3) &:= \left\{ (\Omega, s)^\wedge_{\sot(3)} \; \middle\vert \; \Omega \in \R^3, \; s \in \R \right\}, \\
    (\Omega, s)^\wedge_{\sot(3)} &:= \begin{pmatrix}
        \Omega^\times & 0 \\ 0 & s
    \end{pmatrix}.
\end{align*}
Elements of $\SOT(3)$ may be written as $Q = (R_Q, c_Q)$, where $R_Q \in \SO(3)$ and $c_Q > 0$.
Given $Q \in \SOT(3)$ and $p \in \R^3$, we use the shorthand\footnote{This shorthand notation corresponds to the normal homogeneous coordinates notation.} $Q p := c_Q R_Q p$.

The \emph{special Euclidean group} is the set of 3D rigid body poses,
\begin{align*}
    \SE(3) &:= \left\{ \begin{pmatrix}
        R & x \\ 0 & 1
    \end{pmatrix} \; \middle\vert \; R \in \SO(3), \; x \in \R^3 \right\}, \\
    \se(3) &:= \left\{ (\Omega, v)^\wedge_{\se(3)} \; \middle\vert \; \Omega, v \in \R^3 \right\}, \\
    (\Omega, v)^\wedge_{\se(3)} &:= \begin{pmatrix}
        \Omega^\times & v \\ 0 & 0    \end{pmatrix}.
\end{align*}
Elements of $\SE(3)$ may be written as $P = (R_P, x_P)$, where $R_P \in \SO(3)$ and $x_P \in \R^3$.
Given $P \in \SE(3)$ and $p \in \R^3$, we frequently use the shorthand $P p := R_P p + x_P$ to denote the standard left group action of $\SE(3)$ on $\R^3$.

The \emph{extended special Euclidean group} \cite{2014_barrau_InvariantParticleFiltering} is
\begin{align*}
    \SE_2(3) &:= \left\{ \begin{pmatrix}
        R & x & v \\ 0 & 1 & 0 \\ 0 & 0 & 1
    \end{pmatrix} \; \middle\vert \; R \in \SO(3), \; x, v \in \R^3 \right\}, \\
    \se_2(3) &:= \left\{ (\Omega, v, a)^\wedge_{\se_2(3)} \; \middle\vert \; \Omega, v, a \in \R^3 \right\}, \\
    (\Omega, v, a)^\wedge_{\se_2(3)} &:= \begin{pmatrix}
        \Omega^\times & v & a \\ 0 & 0 & 0 \\ 0 & 0 & 0
    \end{pmatrix}
\end{align*}
Elements of $\SE_2(3)$ may be written as $A = (R_A, x_A, v_A)$, or as $A = (P_A, v_A)$, where $R_A \in \SO(3)$, $x_A \in \R^3$, $v_A \in \R^3$, and $P_A = (R_A, x_A) \in \SE(3)$.

%%%%%%%%%%%%%%%%%%%%%%%%%%%%%%%%%%%%%%%%%%%%%%%%%%
\section{Problem Description}
%%%%%%%%%%%%%%%%%%%%%%%%%%%%%%%%%%%%%%%%%%%%%%%%%%

Choose an arbitrary inertial reference frame $\{0\}$, and consider a robot equipped with an IMU and a camera, both of which are rigidly attached to the robot's body.
Label the camera frame $\{C\}$, and identify the IMU frame $\{I\}$ with the body-fixed frame of the robot $\{B\}$.
The states of the Visual-Inertial SLAM (VI-SLAM) problem are modelled as follows.
\begin{itemize}
    \item $P_\imu := (R_\imu, x_\imu) \in \SE(3)$ is the pose of the IMU $\{B\}$ with respect to the inertial frame $\{0\}$,
    \item $v_\imu \in \R^3$ is the linear velocity of the IMU expressed in the inertial frame $\{0\}$,
    \item $b_\imu = (b_\imu^\Omega, b_\imu^a) \in \R^6$ is the IMU bias (where $b_\imu^\Omega$ and $b_\imu^a$ are the gyroscope and accelerometer biases, respectively),
    \item $T = (R_T, x_T) \in \SE(3)$ is the pose of the camera $\{C\}$ with respect to the body-fixed frame $\{B\}$
    \item $p_i \in \R^3$ are the coordinates of landmark $i$ in the inertial frame $\{0\}$.
\end{itemize}
Figure \ref{fig:vislam_diagram} shows a diagram of the full VI-SLAM configuration.

\begin{figure}[!htb]
    \centering
    \includegraphics[width=0.6\linewidth]{}
    \caption{A diagram showing the states of visual-inertial SLAM. Note that the IMU biases are excluded here.}
    \label{fig:vislam_diagram}
\end{figure}

Let $\inertM := \SE(3) \times \R^3$ denote the \emph{navigation state space} and define the \emph{navigation state} $\xi_\imu := (P_\imu, v_\imu) \in \inertM$.
We frequently use the notation $\xi := (\xi_\imu, b_\imu, T, p_i)$ as shorthand for the full VI-SLAM state.
To ensure that the visual measurements are always well defined we assume that the system trajectory considered never passes through an exception set,
\begin{multline*}
    \mathcal{E} := \{ (\xi_\imu, b_\imu, T, p_i) \in \inertM \times \SE(3) \times (\R^3)^n \mid \\
    (P_\imu T)^{-1} p_i = 0 \text{ for any } i \},
    % \subset \inertM \times \SE(3) \times (\R^3)^n ,
\end{multline*}
corresponding to all situations where the camera centre coincides with a landmark point.
To formalise this, we define the visual-inertial SLAM (VI-SLAM) total space
\[
\vinsT := \inertM \times \SE(3) \times (\R^3)^n - \mathcal{E}
\]
and consider the visual-inertial SLAM problem on $\vinsT$.
Note that $\vinsT$ is an open subset of a smooth manifold and as such is itself a smooth manifold.

%%%%%%%%%%%%%%%%%%%%%%%%%%%%%%%%%%%%%
\subsection{VI-SLAM Dynamics}
%%%%%%%%%%%%%%%%%%%%%%%%%%%%%%%%%%%%%

Let the acceleration due to gravity in the inertial frame $\{ 0 \}$ be $g \eb_3$, where $g \approx 9.81$ m/s$^2$ and $\eb_3 \in \Sph^2$ is the standard direction of gravity in the inertial frame.
Let the measured angular velocity and linear acceleration obtained from the IMU be $(\Omega, a) \in \vecL := \R^3 \times \R^3$, where $\vecL$ is the \emph{input space}.
Then the VI-SLAM dynamics $f : \vecL \to \gothX(\vinsT)$ are
\begin{align}
    \dot{\xi} &= f_{(\Omega, a)}(\xi);    \label{eq:vins_dynamics}
    &\dot{R}_\imu &= R_\imu (\Omega - b^\Omega_\imu)^\times, \\
    &&\dot{x}_\imu &= v_\imu, \notag \\
    &&\dot{v}_\imu &= R_\imu (a - b^a_\imu) + g \eb_3, \notag \\
    &&\dot{b}^\Omega_\imu &= 0, \notag \\
    &&\dot{b}^a_\imu &= 0, \notag \\
    &&\dot{T} &= 0, \notag \\
    &&\dot{p}_i &= 0. \notag
\end{align}

%%%%%%%%%%%%%%%%%%%%%%%%%%%%%%%%%%%%%
\subsection{VI-SLAM Measurements}
%%%%%%%%%%%%%%%%%%%%%%%%%%%%%%%%%%%%%

The camera measurements are modelled as $n$ bearing measurements of the landmarks $p_i$ in the camera frame $\{C\}$ on the manifold $\visualN := (\Sph^2)^n$ where the superscript ``V'' stands for visual measurements.
The measurement function $h : \vinsT \to \visualN$ is given by
\begin{align}
    h(\xi) &:= \left( h^1(\xi), ... h^n(\xi) \right), \label{eq:measurement_function} \\
    h^k(\xi_\imu, b_\imu, T, p_i) &:= \pi_{\Sph^2} \left( (P_\imu T)^{-1} (p_k) \right), \notag
\end{align}
where $\pi_{\Sph^2}$ is defined as in \eqref{eq:sphere-projector}.
Modelling the bearing measurements directly on the sphere rather than the image plane enables the proposed system to model a wide variety of monocular cameras.
Consideration of different camera models is omitted from this paper but is included in the EqVIO software package.

%%%%%%%%%%%%%%%%%%%%%%%%%%%%%%%%%%%%%
\subsection{Invariance of Visual-Inertial SLAM}
\label{sec:invariance}
%%%%%%%%%%%%%%%%%%%%%%%%%%%%%%%%%%%%%

Let $\eb_3$ be the standard gravity direction and define the semi-direct product group
\begin{align*}
    \Sph^1 \ltimes_{\eb_3} \R^3 := \{
        (\theta, x) \; \vline \; \theta \in \Sph^1, x \in \R^3
    \},
\end{align*}
with group product, identity and inverse
\begin{align*}
    (\theta^1, x^1) \cdot (\theta^2, x^2) &= (\theta^1 + \theta^2, x^1 + R_{\eb_3}(\theta^1) x^2), \\
    \id_{\Sph^1 \ltimes_{\eb_3} \R^3} &= (0, 0_{3 \times 1}), \\
    (\theta, x)^{-1} &= (-\theta, - R_{\eb_3}(-\theta) x ),
\end{align*}
where $R_{\eb_3}(\theta) \in \SO(3)$ is the anti-clockwise rotation of an angle $\theta$ about the axis $\eb_3$.
Then $\Sph^1 \ltimes_{\eb_3} \R^3$ is isomorphic to the subgroup
\begin{align*}
    \SE_{\eb_3}(3) := \{
        (R, x) \in \SE(3) \; \vline \; R \eb_3 = \eb_3
    \} \subset \SE(3).
\end{align*}

Define $\alpha : \SE_{\eb_3} \times \vinsT \to \vinsT$ by
\begin{align}
    \alpha(S, (\xi_\imu, b_\imu, T, p_i)) := (S^{-1} P_\imu, R_S^\top v_\imu, b_\imu, T, S^{-1}(p_i)).
    \label{eq:alpha_invariance}
\end{align}
Then $\alpha$ is a right group action of $\SE_{\eb_3}(3)$ on $\vinsT$.
For a given $S \in \SE_{\eb_3}(3)$, the action $\alpha(S, \cdot)$ represents a change of inertial reference frame from $\{0\}$ to $\{1\}$ where $S$ is the pose of $\{1\}$ with respect to $\{0\}$.
Moreover, any change of reference $S \in \SE_{\eb_3}(3)$ leaves the direction of gravity $\eb_3$ unchanged.

\begin{proposition} \label{prop:invariance_action}
The dynamics \eqref{eq:vins_dynamics} and measurements \eqref{eq:measurement_function} of VI-SLAM are invariant with respect to $\alpha$; that is,
\begin{align*}
    f_{(\Omega, a)} (\alpha(S, (\xi_\imu, b_\imu, T, p_i))) &= \td \alpha_S f_{(\Omega, a)} (\xi_\imu, b_\imu, T, p_i), \\
    h(\alpha(S, (\xi_\imu, b_\imu, T, p_i))) &= h(\xi_\imu, b_\imu, T, p_i),
\end{align*}
for any $S \in \SE_{\eb_3}(3)$.
\end{proposition}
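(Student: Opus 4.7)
The plan is to verify both assertions by direct componentwise calculation using the explicit formulas for $\alpha$ in \eqref{eq:alpha_invariance} and the dynamics in \eqref{eq:vins_dynamics}. For the dynamical statement the most transparent route is to note that $\alpha(S, \cdot)$ is a diffeomorphism of $\vinsT$ and that the condition $f_{(\Omega,a)}(\alpha(S, \xi)) = \td\alpha_S f_{(\Omega,a)}(\xi)$ is equivalent to the following: whenever $t \mapsto \xi(t)$ solves $\dot{\xi} = f_{(\Omega,a)}(\xi)$, the curve $t \mapsto \alpha(S, \xi(t))$ solves the same equation. I would therefore differentiate each component of $\alpha(S, \xi)$ in $t$, substitute \eqref{eq:vins_dynamics}, and check that the resulting time-derivatives match the vector field $f_{(\Omega,a)}$ evaluated at the transformed state.

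I would treat the measurement equation first because it is purely algebraic and requires no derivative. Writing $h^k(\alpha(S, \xi)) = \pi_{\Sph^2}\bigl((S^{-1} P_\imu T)^{-1} S^{-1}(p_k)\bigr)$ and expanding $(S^{-1} P_\imu T)^{-1} = T^{-1} P_\imu^{-1} S$, the outer $S$ cancels against the $S^{-1}$ acting on $p_k$ because the $\SE(3)$ action on $\R^3$ is a genuine group action, leaving $\pi_{\Sph^2}(T^{-1} P_\imu^{-1} p_k) = h^k(\xi)$. This step uses neither the gravity-stabiliser condition nor any property of $\pi_{\Sph^2}$ beyond scale-invariance.

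For the dynamical statement I would check the components of \eqref{eq:vins_dynamics} in turn. Writing primed quantities for the transformed state, we have $R'_\imu = R_S^\top R_\imu$, $x'_\imu = R_S^\top(x_\imu - x_S)$, $v'_\imu = R_S^\top v_\imu$, and $p'_i = R_S^\top(p_i - x_S)$, while $b_\imu$ and $T$ are unchanged. Differentiating and substituting \eqref{eq:vins_dynamics} yields $\dot{R}'_\imu = R'_\imu(\Omega - b^\Omega_\imu)^\times$, $\dot{x}'_\imu = v'_\imu$, $\dot{v}'_\imu = R'_\imu(a - b^a_\imu) + g R_S^\top \eb_3$, and $\dot{p}'_i = 0$, with constant quantities remaining constant. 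Six of the seven equations match the required form without further assumption; only the velocity equation imposes a constraint, and it agrees with the required $R'_\imu(a - b^a_\imu) + g \eb_3$ precisely when $R_S^\top \eb_3 = \eb_3$.

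The main obstacle is conceptual rather than computational: pinpointing the exact place where the restriction $S \in \SE_{\eb_3}(3)$ becomes necessary. Every other component of the dynamics, as well as the entire measurement function, is invariant under an arbitrary $\SE(3)$ change of inertial reference; it is only the presence of the fixed inertial gravity vector $g \eb_3$ in the acceleration equation that forces $R_S$ to stabilise $\eb_3$. Once this observation is made, the remainder of the argument reduces to mechanical matrix manipulations using the shorthand $S p = R_S p + x_S$ and the identity $R_{S_1 S_2} = R_{S_1} R_{S_2}$.
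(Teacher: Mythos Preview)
Your proposal is correct and follows essentially the same route as the paper: a direct componentwise verification, with the measurement part handled by the cancellation $(S^{-1}P_\imu T)^{-1}S^{-1} = (P_\imu T)^{-1}$ and the dynamics part reducing to the single nontrivial check $R_S^\top \eb_3 = \eb_3$ in the velocity equation. The only cosmetic difference is that the paper evaluates $f_{(\Omega,a)}$ at the transformed state and then identifies the result as $\td\alpha_S f_{(\Omega,a)}(\xi)$, whereas you differentiate the transformed trajectory first; by the chain rule these are the same computation, as you yourself note.
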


%%%%%%%%%%%%%%%%%%%%%%%%%%%%%%%%%%%%%%%%%%%%%%%%%%
\section{Symmetry of VI-SLAM}
%%%%%%%%%%%%%%%%%%%%%%%%%%%%%%%%%%%%%%%%%%%%%%%%%%

The VI-SLAM symmetry action proposed in this paper combines the symmetry for IMU dynamics developed in \cite{2014_barrau_InvariantParticleFiltering} with the VSLAM symmetry developed in \cite{2021_vangoor_ConstructiveObserverDesign}.
Before discussing the full symmetry group, we discuss the separate symmetries and how they lead to lower linearisation error in the filter development given in the sequel.
This section of the paper is written in a more tutorial style to provide the reader with intuition underlying the proposed algorithm.

The advantage of the extended Euclidean symmetry ($\SE_2(3)$) used for the navigation states lies in providing a locally linear coordinate representation of the ideal IMU dynamics.
That is, for ideal IMU dynamics, then using this representation leads to zero linearisation error during the propagation step of the filter, assuming appropriate Gaussian noise models in local coordinates of course.
This property is lost once the bias states and calibration states are added, however, the resulting update still has considerably lower linearisation error in these coordinates than classical formulations.
Section \ref{sec:imu_symmetry} is based on prior work by Barrau \etal \cite{2014_barrau_InvariantParticleFiltering}, and the results presented therein can equally be applied in an IEKF.

The advantage of the scaled orthogonal transform ($\SOT(3)$) symmetry used for the landmark states lies in providing a framework in which the measurement linearisation error can be minimized.
To make this point clear we analyse the common landmark parametrisations used in the literature and study the linearisation error.
This provides a clear theoretical justification for the inverse depth parametrisation that is state-of-the-art in VIO algorithms and goes on to demonstrate that the $\SOT(3)$ symmetry leads to lower measurement linearisation error again.
The material in Section \ref{sec:landmark_symmetry} is novel to this paper.

%%%%%%%%%%%%%%%%%%%%%%%%%%%%%%%%%%%%%%%%%%%%%%%%%%
\subsection{Symmetry of IMU Dynamics}
\label{sec:imu_symmetry}
%%%%%%%%%%%%%%%%%%%%%%%%%%%%%%%%%%%%%%%%%%%%%%%%%%

Let $\inertf : \vecL \to \gothX(\inertM)$ denote the IMU dynamics considered in \eqref{eq:vins_dynamics} without bias; that is,
\begin{align}
    \ddt (R_\imu, x_\imu, v_\imu) &= f_{(\Omega, a)}(R_\imu, x_\imu, v_\imu), \\
    &= (
    R_\imu \Omega^\times,
    v_\imu,
    R_\imu a + g \eb_3,
    ).
    \label{eq:bias_free_imu_dynamics}
\end{align}
Filter designs such as the EKF, on-manifold EKF \cite{2013_hertzberg_IntegratingGenericSensor}, MEKF \cite{2007_mourikis_MultistateConstraintKalman} and EqF \cite{2023_vangoor_EquivariantFilterEqF} model the evolution of the probability distribution of the system state, given an initial distribution.
In each of these filters, the filter's error coordinates are taken to be normally distributed, and the dynamics of these error coordinates are linearised to propagate the estimated distribution.

Naive EKF designs model the robot's attitude through embedded coordinates as a unit quaternion $q \in \mathbb{H} \simeq \R^4$ \cite{2001_marins_ExtendedKalmanFilter,2006_choukroun_NovelQuaternionKalman,2006_sabatini_QuaternionbasedExtendedKalman,2019_anbu_IntegrationInertialNavigation}.
This leads to the (over-parametrised) error coordinates,
\begin{align*}
    \varepsilon_{\text{\tiny EKF}}(\hat{\xi}_\imu, \xi_\imu)
    := \begin{pmatrix}
        q_\imu - \hat{q}_\imu \\
        x_\imu - x_{\hat{\xi_\imu}} \\
        v_\imu - x_{\hat{\xi_\imu}}
    \end{pmatrix}.
\end{align*}
In the MEKF and the typical on-manifold EKF, the error coordinates are instead defined using the logarithm of $\SO(3)$,
\begin{align*}
    \varepsilon_\text{\tiny MEKF}(\hat{\xi_\imu}, \xi_\imu) :=
    \begin{pmatrix}
        \log_{\SO(3)}(R_\imu R_{\hat{\xi_\imu}}^\top)^\vee \\
        x_\imu - x_{\hat{\xi_\imu}} \\
        v_\imu - x_{\hat{\xi_\imu}}
    \end{pmatrix}.
\end{align*}
Note that, unlike the EKF, the MEKF has a minimal (9-dimensional) representation of filter error.

In an EqF design, the filter state is part of a Lie group rather than the state space $\inertM$ \cite{2023_vangoor_EquivariantFilterEqF}.
Consider the Lie group $\SE_2(3)$ with right group action $\varphi : \SE_2(3) \times \inertM \to \inertM$ given by
\begin{align}
    \varphi(A, \xi_\imu) := (R_\imu R_A, x_\imu + R_\imu x_A, v_\imu + R_\imu v_A), \label{eq:inertial_phi_dfn}
\end{align}
where $A = (R_A, x_A, v_A) \in \SE_2(3)$.
In this case, $\SE_2(3)$ and $\inertM$ are isomorphic as smooth manifolds, but the distinction is important for systems on general homogeneous spaces, such as the full VI-SLAM system, where the Lie group may be of a higher dimension than the state space.
Choose the origin configuration $\mr{\xi}_\imu = (I_3, 0, 0) \in \inertM$, and define a local coordinate chart $\vartheta^\imu(\xi_\imu) := \log_{\SE_2(3)}^\vee(R_\imu, x_\imu, v_\imu) \in \R^9$.
Then the EqF error coordinates are
\begin{align}
    & \varepsilon_\text{\tiny EqF}(\hat{A}, \xi_\imu)
    := \vartheta^\text{\tiny I}(\varphi(\hat{A}^{-1}, \xi_\imu)),
    \\
    &= \log_{\SE_2(3)} \begin{pmatrix}
        R_\imu R_{\hat{A}}^\top & x_\imu - R_\imu R_{\hat{A}}^\top x_{\hat{A}} & v_\imu - R_\imu R_{\hat{A}}^\top v_{\hat{A}} \\
        0 & 1 & 0 \\ 0 & 0 & 1
    \end{pmatrix}^\vee \notag,
\end{align}
where $\hat{A} = (R_{\hat{A}}, x_{\hat{A}}, v_{\hat{A}}) \in \SE_2(3)$ is the filter state.

Each of these filters model their error coordinates as being drawn from a normal distribution with zero mean and covariance $\Sigma$, $\varepsilon \sim N(0, \Sigma)$, where $\Sigma$ is the filter's Riccati matrix.
The reported probability distribution of these filters is not guaranteed to match the true distribution of the state in general, as the propagation of the covariance depends on the linearisation of the error dynamics.
For the EqF, however, this linearisation is dependent on the chosen symmetry group $\grpG$, rather than on a chosen set of coordinates as in the EKF.
In some special cases, the system dynamics are \emph{group affine} with respect to the symmetry group $\grpG$, and this results in an exactly linear propagation of the error coordinates \cite{2018_barrau_InvariantKalmanFiltering}.

Barrau and Bonnabel \cite{2014_barrau_InvariantParticleFiltering} showed that the bias-free IMU dynamics are group affine with respect to the action \eqref{eq:inertial_phi_dfn} of $\SE_2(3)$.
As a result, using the Lie group $\SE_2(3)$, the propagation of bias-free IMU dynamics in the EqF has no linearisation error in normal coordinates.
Hence, as long as the initial distribution of the error coordinates is Gaussian, the probability distribution estimated by the EqF will match the true distribution of the state exactly.
This is demonstrated in the following example.

Let the initial value of the true state $\xi_\imu(0) = \exp_{\SE_2(3)}(\eta_0^\wedge)$, where $\eta_0 \in \R^{9}$ is drawn from the distribution $N(0, \Sigma_0)$ with
\begin{align*}
    \Sigma_0 = \begin{pmatrix}
        0.2^2 I_3 & 0 & 0 \\
        0 & 0.01^2 I_3 & 0 \\
        0 & 0 & 0.01^2 I_3
    \end{pmatrix}.
\end{align*}
Initialise each of the filters with this data, and let the angular velocity be $\Omega = (0,0,0.1)$~rad/s and the linear acceleration be $a = (0.1,0.0,0.0)$~m/s$^2$.

\begin{figure}[!htb]
    \centering
    \includegraphics[width=0.5\linewidth]{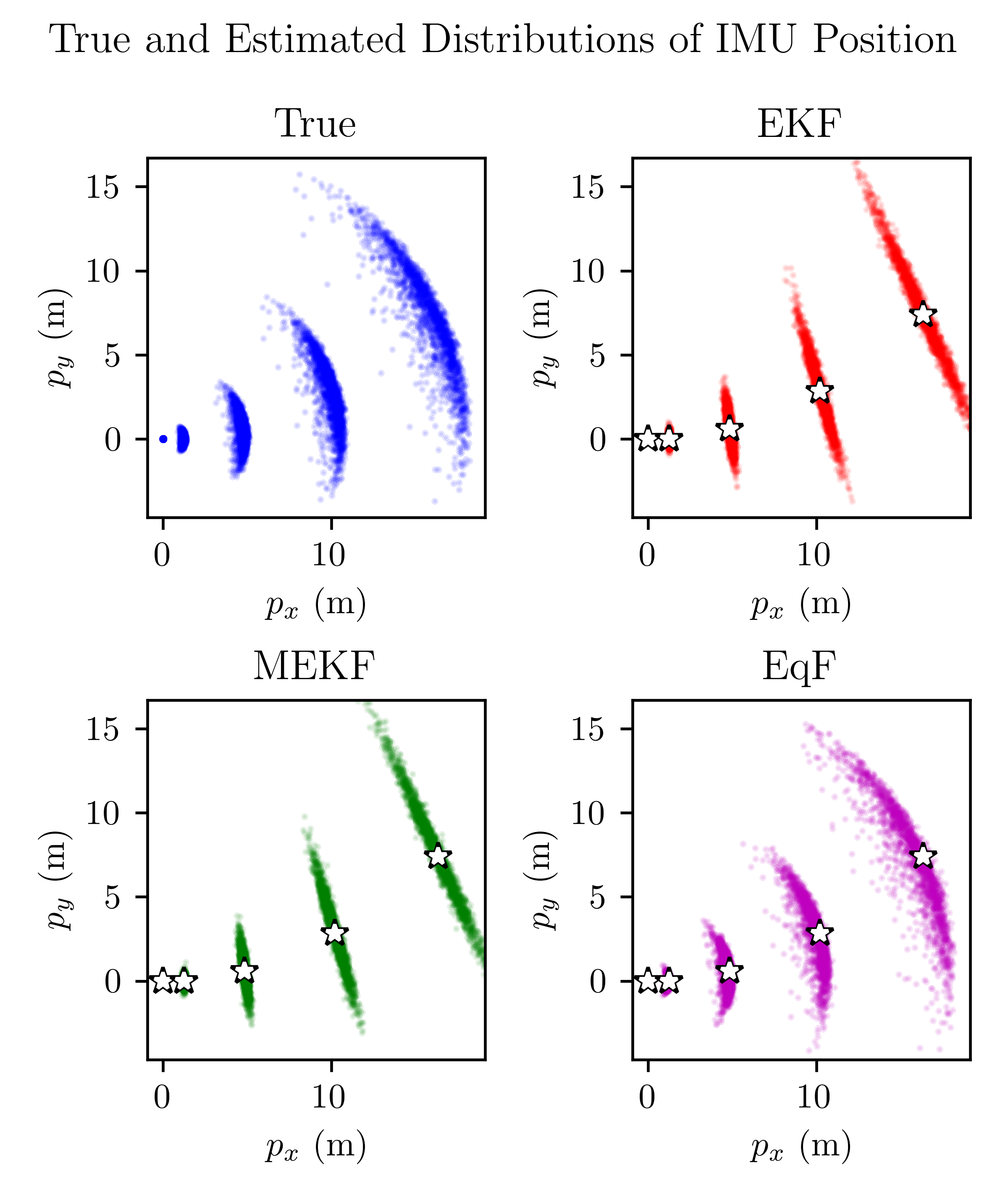}
    \caption{The true distribution of IMU positions at increments of 5~s obtained from integrating the dynamics \eqref{eq:bias_free_imu_dynamics} compared with the estimated distributions from the EKF, MEKF, and EqF.}
    \label{fig:se23_distribution}
\end{figure}

Figure \ref{fig:se23_distribution} shows the true and estimated distributions of the IMU position after integrating in 5~s increments.
At each time, the true distribution of the system is shown by sampling 2000 particles according to the initial distribution of the state and integrating them independently.
The estimated distributions are obtained by integrating the filter equations in 5~s increments, before sampling 2000 points in the error coordinates and mapping them to the estimated state using the filter's observer state.
The figure clearly shows the advantages of applying an appropriate symmetry to the propagation of IMU uncertainty, as the distribution reported by the EqF matches the true distribution far more closely than that of the EKF or MEKF.

\begin{remark}
The inclusion of biases in the gyroscope and accelerometer measurements means that the full IMU dynamics \eqref{eq:vins_dynamics} are not group affine with respect to the action \eqref{eq:inertial_phi_dfn} \cite{2018_brossard_InvariantKalmanFiltering}.
However, the linearisation error introduced by the bias states is proportional to the error in the bias estimates, which is often small and can be reduced quickly by using an initialisation maneuver in practice.
Recent work by Fornasier \etal \cite{2022_fornasier_OvercomingBiasEquivariant,2022_fornasier_EquivariantFilterDesign} shows that alternative symmetries exist for including input biases in an EqF, but their analysis is beyond the scope of the current paper and left for future work.
\end{remark}

An EKF is not an optimal estimator, unlike the linear Kalman filter, due to the accumulation of linearisation errors over time.
By exploiting symmetry properties as above, the linearisation error in each step of the EKF can be reduced or even eliminated completely.
This leads to improved filter designs that closely reflect the stochastics of the underlying system and provide more accurate state estimates.

%%%%%%%%%%%%%%%%%%%%%%%%%%%%%%%%%%%%%%%%%%%%%%%%%%
\subsection{Symmetry of Visual Landmarks}
\label{sec:landmark_symmetry}
%%%%%%%%%%%%%%%%%%%%%%%%%%%%%%%%%%%%%%%%%%%%%%%%%%

Consider the simplified system of a single landmark $q \in \R^3 \setminus \{ 0 \}$ in the camera-fixed frame.
If the camera-fixed angular and linear velocity are $\Omega_C, v_C \in \R^3$, respectively, then the dynamics of the landmark are
\begin{align}
    \dot{q} = f^\text{\tiny V}_{(\Omega_C, v_C)}(q) := -\Omega_C \times q - v_C.
    \label{eq:visual_dynamics_landmark}
\end{align}
The visual measurement of the landmark is
\begin{align}
    h^\vis(q) = \frac{q}{\vert q \vert}
    \label{eq:visual_measurement_landmark}
\end{align}

A parametrisation of the landmark is a diffeomorphism $\varsigma: U \subset \R^3 \to \visualM \subset \R^k$, where $U$ is an open subset of $\R^3$ and $\visualM$ is a smooth 3-dimensional submanifold of $\R^k$ for some $k \geq 3$.

The \emph{Euclidean} parametrisation was commonly used in early works on visual SLAM \cite{2006_bailey_ConsistencyEKFSLAMAlgorithm} and is defined by
\begin{align}
    \varsigma_\text{Euc}(q) &:= q, &
    \varsigma_\text{Euc}^{-1}(z) &:= z.
    \label{eq:visual_param_euclid}
\end{align}
The \emph{inverse-depth} parametrisation, and variants thereof, are used more frequently in recent literature \cite{2008_civera_InverseDepthParametrization,2017_bloesch_IteratedExtendedKalman,2020_geneva_OpenVINSResearchPlatform}.
Here, we consider the archetypical example given by
\begin{align}
    % \varsigma_\text{ID}(q) &:= \begin{pmatrix}
    %     q / \vert q \vert \\ 1 / \vert q \vert
    % \end{pmatrix}, &
    \varsigma_\text{ID}(q) &:= \begin{pmatrix}
        \arccos(\frac{q_1}{\vert q \vert}) \\
        \mathrm{atan2}(q_2, q_3) \\
        \frac{1}{\vert q \vert}
    \end{pmatrix},
    &
    \varsigma_\text{ID}^{-1}(z) &:= \frac{1}{z_3} \begin{pmatrix}
        \cos(z_1) \\
        \sin(z_2) \sin(z_1) \\
        \cos(z_2) \sin(z_1)
    \end{pmatrix}
    % \varsigma_\text{ID}^{-1}(z) &:= \begin{pmatrix}
    %     I_3 & 0_{3 \times 1}
    % \end{pmatrix} \frac{z}{z_4}.
    \label{eq:visual_param_invdepth}
\end{align}

We introduce a new parametrisation for landmarks with visual measurements by exploiting the $\SOT(3)$ symmetry actions developed in \cite{2021_vangoor_ConstructiveObserverDesign}.
The \emph{polar} parametrisation is a novel parametrisation for visual landmarks, defined by
\begin{align}
    % \varsigma_{\SOT(3)}(q) &:= \begin{pmatrix}
    %     - \begin{pmatrix} I_2 & 0_{2 \times 1} \end{pmatrix}
    %      \arccos(\frac{\eb_3^\top q}{\vert q \vert}) \frac{\eb_3 \times q}{\vert \eb_3 \times q \vert}
    %     \\
    %     -\log(\vert q \vert)
    % \end{pmatrix}, \notag \\
    % \theta &:= \arccos(\frac{q_3}{\vert q \vert}) \notag \\
    \varsigma_{\SOT(3)}(q) &:=
        \begin{pmatrix}
            \arccos(\frac{q_3}{\vert q \vert}) \frac{q_2}{\vert \eb_3 \times q \vert} \\
            \arccos(\frac{q_3}{\vert q \vert}) \frac{- q_1}{\vert \eb_3 \times q \vert} \\
            -\log(\vert q \vert)
        \end{pmatrix}, \notag \\
    \varsigma_{\SOT(3)}^{-1}(z) &:=
        \exp_{\SOT(3)}\left(
            (\begin{pmatrix}
            z_1 & z_2 & 0 & z_3
        \end{pmatrix}^\top)_{\sot(3)}^\wedge
        \right)^{-1} \eb_3.
    \label{eq:visual_param_sot3}
\end{align}
This parametrisation provides normal coordinates for $\R^3$ about $\eb_3$ with respect to the right action of $\SOT(3)$ defined in Lemma \ref{lem:simple_landmark_equivariance}.

\begin{lemma}\label{lem:simple_landmark_equivariance}
    Let $\varphi^\vis : \SOT(3) \times \R^3 \setminus \{ 0 \} \to \R^3 \setminus \{ 0 \}$ and $\rho^\vis : \SOT(3) \times \Sph^2 \to \Sph^2$ be defined by
    \begin{align}
        \varphi^\vis(Q, q) &:= c_Q^{-1} R_Q^\top q, \notag \\
        \rho^\vis(Q, y) &:= R_Q^\top y.
        \label{eq:visual_group_actions}
    \end{align}
    Then $\varphi^\vis$ and $\rho^\vis$ are transitive right group actions, and the visual measurement function \eqref{eq:visual_measurement_landmark} is equivariant with respect to these actions; that is,
    \begin{align*}
        h^\vis  (\varphi^\vis(Q, q)) = \rho^\vis ( Q, h^\vis(q)),
    \end{align*}
    for all $Q \in \SOT(3)$ and $q \in \R^3 \setminus \{0\}$.
\end{lemma}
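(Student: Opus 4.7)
The lemma contains three assertions to verify: that $\varphi^\vis$ and $\rho^\vis$ satisfy the right-action axioms, that both actions are transitive, and that $h^\vis$ intertwines them. My plan is to dispatch these in order, since each is a short direct computation with the $\SOT(3)$ product $Q_1 Q_2 = (R_{Q_1} R_{Q_2},\, c_{Q_1} c_{Q_2})$.

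First I would verify the group action axioms. The identity element is $\id = (I_3, 1)$, so $\varphi^\vis(\id, q) = 1^{-1} I_3^\top q = q$ and $\rho^\vis(\id, y) = I_3^\top y = y$. For the right compatibility condition \eqref{eq:group_action_compatible}, the key observation is that under the product $Q_1 Q_2$ both the scalar and rotational parts multiply componentwise, and taking a transpose reverses the order of rotations, i.e. $(R_{Q_1} R_{Q_2})^\top = R_{Q_2}^\top R_{Q_1}^\top$. Carrying out the substitution then yields
\begin{align*}
    \varphi^\vis(Q_1 Q_2, q) = c_{Q_2}^{-1} R_{Q_2}^\top \bigl( c_{Q_1}^{-1} R_{Q_1}^\top q \bigr) = \varphi^\vis(Q_2, \varphi^\vis(Q_1, q)),
\end{align*}
and the analogous computation works for $\rho^\vis$.

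Next I would handle transitivity. For $\rho^\vis$, since $\SO(3)$ already acts transitively on $\Sph^2$, given $y_1, y_2 \in \Sph^2$ one may pick any $R \in \SO(3)$ with $R^\top y_1 = y_2$ and set $Q = (R, 1)$. For $\varphi^\vis$, given $q_1, q_2 \in \R^3 \setminus \{0\}$, I would first choose $R_Q \in \SO(3)$ such that $R_Q^\top q_1 / |q_1| = q_2 / |q_2|$ (possible by transitivity on the sphere), and then set $c_Q = |q_1|/|q_2| > 0$; this choice is well-defined precisely because the origin is excluded, and it gives $\varphi^\vis(Q, q_1) = q_2$ as required.

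Finally, equivariance reduces to the fact that scaling by the positive scalar $c_Q^{-1}$ and rotating by $R_Q^\top$ commute with normalisation in just the right way:
\begin{align*}
    h^\vis\bigl(\varphi^\vis(Q, q)\bigr) = \frac{c_Q^{-1} R_Q^\top q}{\vert c_Q^{-1} R_Q^\top q \vert} = \frac{c_Q^{-1} R_Q^\top q}{c_Q^{-1} \vert q \vert} = R_Q^\top \frac{q}{\vert q \vert} = \rho^\vis(Q, h^\vis(q)),
\end{align*}
using that $c_Q > 0$ to pull the scalar outside the norm and that $R_Q$ is an isometry. None of these steps constitutes a real obstacle; the only substantive design choice is the factor $c_Q^{-1}$ (rather than $c_Q$) in the definition of $\varphi^\vis$, which is what makes $c_Q$ cancel cleanly under $\pi_{\Sph^2}$ and thereby forces the equivariance relation to hold.
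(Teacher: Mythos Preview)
Your proposal is correct and follows essentially the same approach as the paper: the equivariance computation you give is line-for-line the one in the paper, and the paper simply declares the right-action and transitivity claims ``straightforward'' without writing them out, whereas you supply the short verifications. There is nothing to correct.
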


Let $\hat{q} \in \R^3 \setminus \{0\}$ be the estimated landmark position and $q \in \R^3 \setminus \{0\}$ be the true landmark position.
Then the \emph{parametrised state error} is
\begin{align*}
    \varepsilon := \varsigma(q) - \varsigma(\hat{q}),
\end{align*}
for a particular parametrisation $\varsigma$.
The true state can be identified in terms of the estimated state and the parametrised state error,
\begin{align*}
    q = \varsigma^{-1}(\varepsilon + \varsigma(\hat{q})).
\end{align*}
Linearising the dynamics \eqref{eq:visual_dynamics_landmark} and measurement \eqref{eq:visual_measurement_landmark} in terms of $\varepsilon$ about $\varepsilon=0$ yields
\begin{align}
    f^\text{\tiny V}_{(\Omega, v)}(q) &= f^\text{\tiny V}_{(\Omega, v)} \circ \varsigma^{-1}(\varepsilon + \varsigma(\hat{q}))
    \label{eq:state_lin_error} \\
    &= f^\text{\tiny V}_{(\Omega, v)}(\hat{q})
    + \tD_q |_{\hat{q}} f^\text{\tiny V}_{(\Omega, v)}(q) \cdot \tD_s |_{\varsigma(\hat{q})} \varsigma(s) [\varepsilon]
    % + O(\vert \varepsilon \vert^2), \\
    + \mu^f(q), \notag \\
    %%%%%%%%%%%%%%%%%%%%%%%%%%%%%%%%%%
    h^\text{\tiny V}(q) &= h^\text{\tiny V} \circ \varsigma^{-1}(\varepsilon + \varsigma(\hat{q}))
    \label{eq:output_lin_error} \\
    &= h^\text{\tiny V}(\hat{q})
    + \tD_p |_{\hat{q}} h^\text{\tiny V}(q) \cdot \tD_s |_{\varsigma(\hat{q})} \varsigma(s) [\varepsilon]
    % + O(\vert \varepsilon \vert^2).
    + \mu^h(q), \notag
\end{align}
where $\mu^f(q)$ and $\mu^h(q)$ are the the dynamics and measurement linearisation errors, respectively, and capture all higher order terms.
In general, both $\mu^f(q)$ and $\mu^h(q)$ are $O(\vert \varepsilon \vert^2)$, but there are important exceptions.
First, the dynamics \eqref{eq:visual_dynamics_landmark} are exactly linear in the Euclidean parametrisation \eqref{eq:visual_param_euclid}, and hence $\mu^f_\text{Euc}(q) \equiv 0$.
Second, the equivariant output approximation proposed in \cite{2023_vangoor_EquivariantFilterEqF} is available when using the polar parametrisation as these are the normal coordinates associated with the action $\varphi^\vis$ \eqref{eq:visual_group_actions}.
Specifically, applying \cite[Lemma V.3]{2023_vangoor_EquivariantFilterEqF} to this example yields
\begin{align*}
    C^\star_t \varepsilon
    &= \frac{1}{2} (\tD_E |_\id \rho^\text{\tiny V}(E, y) + \tD_E |_\id \rho^\text{\tiny V}(E, \eb_3)) \varepsilon^\wedge_{\sot(3)}, \\
    &= \frac{1}{2} \ddt |_{t=0} (\rho^\text{\tiny V}(e^{t \varepsilon^\wedge_{\sot(3)}}, y) + \rho^\text{\tiny V}(e^{t \varepsilon^\wedge_{\sot(3)}}, \eb_3)), \\
    &= \frac{1}{2} \ddt |_{t=0} (-R_{e^{t \varepsilon^\wedge_{\sot(3)}}} y -R_{e^{t \varepsilon^\wedge_{\sot(3)}}} \eb_3), \\
    &= \frac{1}{2} (-\Omega^\times_{\varepsilon} y -\Omega^\times_{\varepsilon} \eb_3), \\
    &= \frac{1}{2} (y + \eb_3)^\times \Omega_{\varepsilon},
\end{align*}
and therefore,
\begin{align*}
    C_t^\star =
        \frac{1}{2} (y + \eb_3)^\times
    \begin{pmatrix}
        1 & 0 & 0 \\
        0 & 1 & 0 \\
        0 & 0 & 0
    \end{pmatrix}
\end{align*}
This provides an alternative measurement approximation that reduces the output linearisation error to $\mu^h_{\SOT(3)}(q) = O(\vert \varepsilon \vert^3)$ by exploiting equivariance and using the available system measurement value $y = h(q)$.

% respectively, where the state matrix $A_t$ and output matrix $C_t$ are given by
% \begin{align*}
%     A_t &:= \tD_p |_{\hat{q}} f^\text{\tiny V}_{(\Omega, v)}(q) \cdot \tD_s |_{\varsigma(\hat{q})} \varsigma(s), \\
%     C_t &:= \tD_p |_{\hat{q}} h^\text{\tiny V}(q) \cdot \tD_s |_{\varsigma(\hat{q})} \varsigma(s).
% \end{align*}

% For each of the considered parametrisations, the state and output linearisation errors can be found for values of $q$ in a domain around $\hat{q}$.

To see the effects of different parametrisations on the linearisation errors, let $\hat{q} = (0,0,5)$ and consider the domain
\begin{align}
    U = \{ (z \tan(\theta), 0, z) \in \R^3 \mid z \in (0.1, 10), \; \theta \in (-\frac{\pi}{6}, \frac{\pi}{6}) \}.
    \label{eq:linearisation_domain_U}
\end{align}

\begin{figure*}[!tb]
    \centering
    \includegraphics[width=0.9\linewidth]{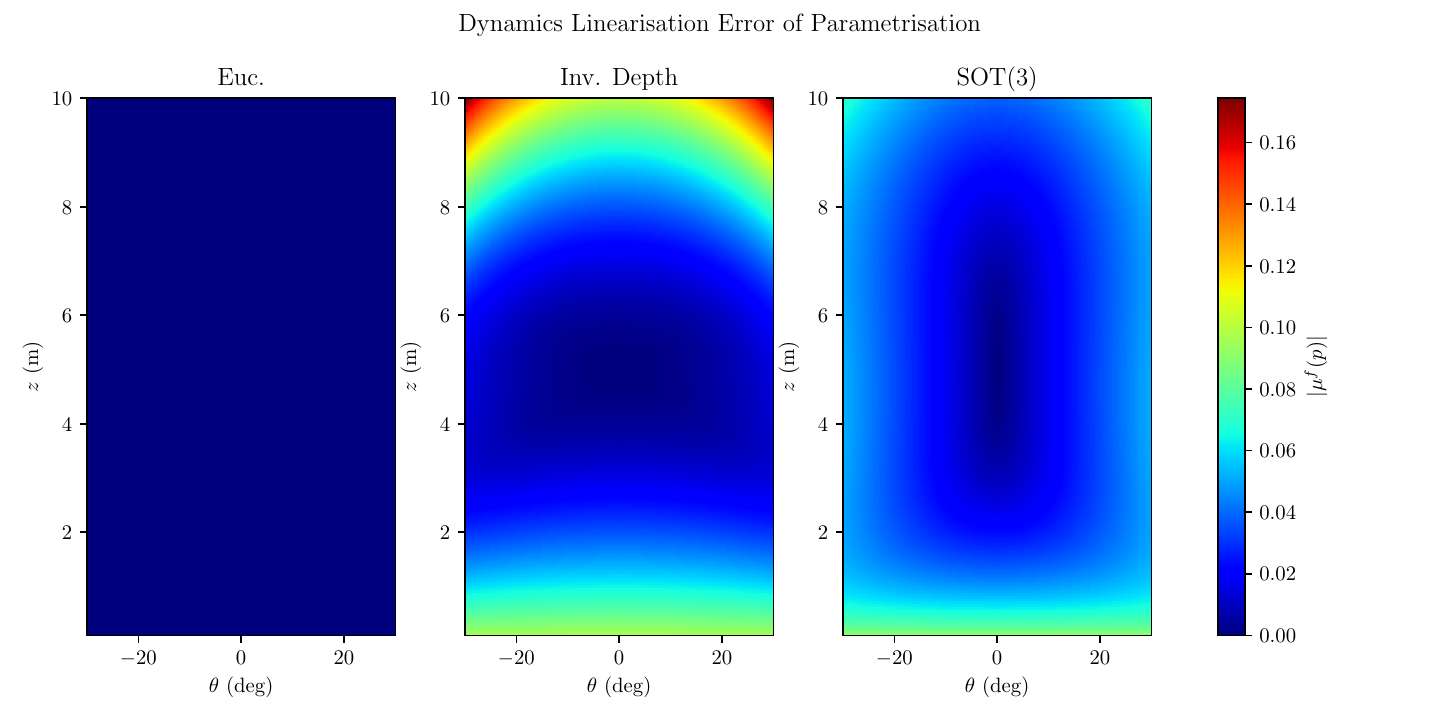}
    \caption{The norm of linearisation error \eqref{eq:state_lin_error} of the landmark dynamics \eqref{eq:visual_dynamics_landmark} for the Euclidean \eqref{eq:visual_param_euclid}, inverse-depth \eqref{eq:visual_param_invdepth}, and polar \eqref{eq:visual_param_sot3} parametrisations over the domain defined in \eqref{eq:linearisation_domain_U}.
    The Euclidean parametrisation has zero dynamics linearisation error since the dynamics are exactly linear in these coordinates.
    }
    \label{fig:dynamics_linearisation}
\end{figure*}

Let $\Omega_C = (0.0,0.2,0.0)$ and $v_C = (0.0,0.0,0.1)$.
This linear and angular velocity is typical of a camera moving forward and turning about the vertical axis of the image.

Figures \ref{fig:dynamics_linearisation} and \ref{fig:output_linearisation} show the dynamics and output linearisation errors $\mu^f(q)$ and $\mu^h(q)$, respectively, for each of the parametrisation (\ref{eq:visual_param_euclid},\ref{eq:visual_param_invdepth},\ref{eq:visual_param_sot3})
Figure \ref{fig:output_linearisation} also shows the equivariant output approximation \cite{2023_vangoor_EquivariantFilterEqF} in the polar parametrisation.
As expected, the Euclidean parametrisation yields no dynamics linearisation error over any part of the domain.
Meanwhile, the inverse-depth and polar parametrisations have non-zero dynamics linearisation errors that increase toward the edges of the domain, with the polar parametrisation having a lower maximum linearisation error.
On the other hand, for the output linearisation there is a clear advantage to using the inverse-depth or polar over the Euclidean parametrisation.
Finally, while the ordinary performance of the inverse-depth and polar parametrisations is similar, the polar parametrisation is able to use the equivariant output approximation which greatly reduces its output linearisation error everywhere in $U$.

\begin{figure}[!htb]
    \centering
    \includegraphics[width=0.6\linewidth]{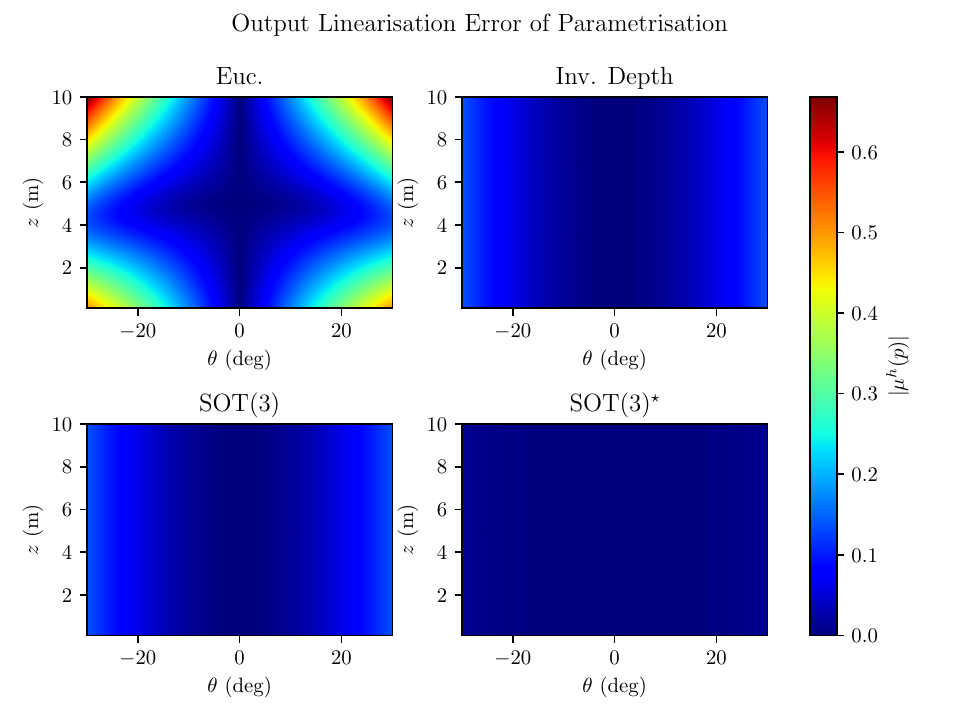}
    \caption{The norm of linearisation error \eqref{eq:output_lin_error} of the visual landmark measurement \eqref{eq:visual_measurement_landmark} for the Euclidean \eqref{eq:visual_param_euclid}, inverse-depth \eqref{eq:visual_param_invdepth}, and polar \eqref{eq:visual_param_sot3} parametrisations over the domain defined in \eqref{eq:linearisation_domain_U}.
    The bottom-right subplot shows the linearisation error obtained when applying the equivariant output approximation \cite{2021_vangoor_ConstructiveObserverDesign} in the polar parametrisation.
    }
    \label{fig:output_linearisation}
\end{figure}

%%%%%%%%%%%%%%%%%%%%%%%%%%%%%%%%%%%%%
\subsection{Composite Symmetry for VI-SLAM}
%%%%%%%%%%%%%%%%%%%%%%%%%%%%%%%%%%%%%

Define the \emph{VI-SLAM Group} to be
\begin{align}
    \vinsG := \SE_2(3) \times \SOT(3)^n.
    \label{eq:vins_group_dfn}
\end{align}
This is a product Lie group in the sense of \eqref{eq:product_lie_group}.
This group captures the symmetries that are fundamental to the VIO system: $\SE_2(3)$ for the IMU dynamics and $\SOT(3)^n$ for $n$ landmarks with visual measurements.
In a practical implementation, however, it is also necessary to consider the extrinsic calibration parameters and IMU biases, for which an $\SE(3)$ and an $\R^6$ symmetry can be used, respectively.
To this end, define the extended direct product group
\begin{align}
    \grpG &:= \SE_2(3) \times \R^6 \times \SE(3) \times \SOT(3)^n, \notag \\
    &\simeq \vinsG \times \R^6 \times \SE(3).
    \label{eq:full_vins_group_dfn}
\end{align}
We denote a typical element as $X = (A, \beta, B, Q_1, ..., Q_n) \in \grpG$, and frequently use the shorthand $(A, \beta, B, Q_i)$.

\begin{lemma} \label{lem:state_action}
    The map $\phi : \grpG \times \vinsT \to \vinsT$ defined by
    \begin{align}
        \phi & ((A, \beta, B, Q_i), (\xi_\imu, b_\imu, T, p_i)) \notag \\
        &:= (\varphi^\imu(A, \xi_\imu), b_\imu + \beta,
        P_A^{-1} T B, P_\imu T B Q_i^{-1} T^{-1} P_\imu^{-1} (p_i)), \label{eq:total_state_action}
    \end{align}
    where $\varphi^\imu$ is defined as in \eqref{eq:inertial_phi_dfn},
    is a transitive right group action.
\end{lemma}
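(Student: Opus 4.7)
The plan is to verify four properties: (i) the identity condition $\phi(\id_\grpG, \xi) = \xi$, (ii) the compatibility condition $\phi(X_1 X_2, \xi) = \phi(X_2, \phi(X_1, \xi))$, (iii) that $\phi$ maps into $\vinsT$ rather than into the exception set $\mathcal{E}$, and (iv) transitivity. Smoothness is immediate, because every block of $\phi$ consists of multiplication, inversion, or addition in matrix Lie groups and in $\R^6$.

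I would argue (i) and (ii) component by component. The identity condition is a direct substitution in each coordinate. For compatibility, the navigation coordinate is handled by $\varphi^\imu$ of \eqref{eq:inertial_phi_dfn}, already established as a right group action, and the bias coordinate is the standard additive $\R^6$ action. The calibration coordinate reduces to $(P_{A_1} P_{A_2})^{-1} T (B_1 B_2) = P_{A_2}^{-1} (P_{A_1}^{-1} T B_1) B_2$, which holds because the $\SE(3)$-block of $\SE_2(3)$ satisfies $P_{A_1 A_2} = P_{A_1} P_{A_2}$. The landmark coordinate is the delicate case. The key simplification is that after $X_1$ acts, the composite \emph{IMU pose times calibration} collapses to $(P_\imu P_{A_1})(P_{A_1}^{-1} T B_1) = P_\imu T B_1$, so every $P_{A_1}$ factor cancels when $X_2$ is applied. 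Expanding $\phi(X_2, \phi(X_1, \xi))$ in the landmark slot and substituting $p_i' = P_\imu T B_1 Q_{1,i}^{-1} T^{-1} P_\imu^{-1}(p_i)$ gives
\begin{align*}
    P_\imu T (B_1 B_2) Q_{2,i}^{-1} Q_{1,i}^{-1} T^{-1} P_\imu^{-1}(p_i),
\end{align*}
which matches the landmark slot of $\phi(X_1 X_2, \xi)$ via $(Q_{1,i} Q_{2,i})^{-1} = Q_{2,i}^{-1} Q_{1,i}^{-1}$.

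For (iii), the new camera-frame coordinates of landmark $i$ after applying $X$ simplify to $Q_i^{-1} (P_\imu T)^{-1}(p_i)$. The vector $(P_\imu T)^{-1}(p_i)$ is nonzero by the assumption $\xi \notin \mathcal{E}$, and $Q_i$ acts as an invertible scaling-plus-rotation on $\R^3$, so its image is nonzero; hence $\phi(X, \xi) \notin \mathcal{E}$.

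For (iv), I would assemble the required $X = (A, \beta, B, Q_i)$ from two given configurations $\xi^1, \xi^2 \in \vinsT$ in stages. First choose $A$ by the transitivity of $\varphi^\imu$ on $\inertM$; then set $\beta = b_\imu^2 - b_\imu^1$; then $B = (T^1)^{-1} P_A T^2$ forces the calibration to match; finally each $Q_i$ is pinned down by the requirement $Q_i^{-1} (P_\imu^1 T^1)^{-1}(p_i^1) = (P_\imu^2 T^2)^{-1}(p_i^2)$, where both sides are nonzero vectors in $\R^3$ by $\xi^1, \xi^2 \notin \mathcal{E}$. Existence of such a $Q_i$ follows from transitivity of $\SOT(3)$ on $\R^3 \setminus \{0\}$: pick a rotation aligning the two unit directions and fix the scale by the appropriate ratio of norms. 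The main obstacle across the whole proof is the landmark bookkeeping in (ii); the cancellation through $P_\imu' T' = P_\imu T B_1$ is what justifies the chosen form of $\phi$ in the first place, and it is worth laying out carefully before the rest becomes mechanical.
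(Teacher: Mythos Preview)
Your proposal is correct and follows essentially the same route as the paper: verify identity and compatibility component by component (with the landmark cancellation $P_\imu' T' = P_\imu T B_1$ as the crux), then construct $X$ explicitly for transitivity. Your added step (iii), checking that $\phi$ lands in $\vinsT$ rather than the exception set, is a detail the paper omits but is a worthwhile inclusion.
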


\begin{lemma}\label{lem:compatible_phi_alpha}
The group action $\phi$ \eqref{eq:total_state_action} is compatible with the VI-SLAM invariance $\alpha$ \eqref{eq:alpha_invariance}; that is,
\begin{align*}
    \phi(X, \alpha(S, \xi))
    = \alpha(S, \phi(X, \xi)),
\end{align*}
for all $\xi \in \vinsT$, $S \in \SE_{\eb_3}(3)$ ,and $X \in \grpG$.
\end{lemma}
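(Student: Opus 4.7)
The plan is to verify the identity by a direct component-wise calculation on the state tuple $(\xi_\imu, b_\imu, T, p_i)$. Both $\phi$ and $\alpha$ are defined slot-by-slot, and their only nontrivial interaction is on the navigation state $\xi_\imu$ and on each landmark $p_i$: the bias $b_\imu$ and the extrinsic calibration $T$ are fixed by $\alpha$, so on those slots each side reduces to $b_\imu + \beta$ and $P_A^{-1} T B$ respectively, giving immediate equality.

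For the IMU navigation state, I would write $S^{-1} P_\imu$ as the pose with rotation $R_S^\top R_\imu$ and translation $R_S^\top(x_\imu - x_S)$, then expand $\varphi^\imu(A, \alpha(S, \xi_\imu))$ and $\alpha(S, \varphi^\imu(A, \xi_\imu))$ using definition \eqref{eq:inertial_phi_dfn}. Each produces the triple $(R_S^\top R_\imu R_A,\; R_S^\top(x_\imu - x_S) + R_S^\top R_\imu x_A,\; R_S^\top(v_\imu + R_\imu v_A))$. Conceptually this is just the statement that $\alpha$ acts on $\inertM$ by left multiplication by $S^{-1}$ on the pose and by $R_S^\top$ on the linear velocity, which commutes with the right action of $\SE_2(3)$ underlying $\varphi^\imu$.

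For each landmark $p_i$, the key observation is that the $p_i$-component of $\phi$ is a conjugation by $P_\imu T$ applied to $p_i$, namely $(P_\imu T) B Q_i^{-1} (P_\imu T)^{-1}(p_i)$. Under $\alpha(S, \cdot)$, the pose $P_\imu$ is replaced by $S^{-1} P_\imu$ while $T, B, Q_i$ are untouched and $p_i$ is replaced by $S^{-1}(p_i)$. Substituting, the landmark component of $\phi(X, \alpha(S, \xi))$ becomes $(S^{-1} P_\imu T) B Q_i^{-1} (S^{-1} P_\imu T)^{-1}(S^{-1}(p_i))$. Expanding the inverse as $(P_\imu T)^{-1} S$ and collapsing the resulting $S \cdot S^{-1}$ in the middle leaves $S^{-1}\bigl((P_\imu T) B Q_i^{-1} (P_\imu T)^{-1}(p_i)\bigr)$, which is precisely $S^{-1}$ applied to the landmark in $\phi(X, \xi)$, i.e.\ the landmark component of $\alpha(S, \phi(X, \xi))$.

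The main obstacle is essentially the bookkeeping on the landmark term; the point that must be seen is that $P_\imu T$ conjugates $B Q_i^{-1}$ on both sides, so the extra $S^{-1}$ introduced by $\alpha$ on $P_\imu$ is exactly cancelled by the $S$ produced in the inverse factor, leaving a single outer $S^{-1}$ that matches the $\alpha$-action on the final point. Everything else is routine matrix algebra, and since the four components are verified independently the lemma follows by combining them.
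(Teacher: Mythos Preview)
Your proposal is correct and follows essentially the same approach as the paper: a direct component-wise expansion of $\phi(X,\alpha(S,\xi))$ versus $\alpha(S,\phi(X,\xi))$, with the only nontrivial step being the $S\,S^{-1}$ cancellation in the landmark conjugation. The paper's proof is slightly terser on the navigation-state slot (it just writes the expanded tuple rather than isolating $\varphi^\imu$), but the argument is identical.
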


\begin{lemma}\label{lem:output_action}
The map $\rho : \grpG \times \visualN \to \visualN$ defined by
\begin{align}
    \rho((A, \beta, B, Q_i), (y_i)) := (R_{Q_i}^\top y_i),
\end{align}
is a right group action.
Additionally, the measurement function \eqref{eq:measurement_function} is equivariant with respect to the actions $\phi$ \eqref{eq:total_state_action} and $\rho$; that is,
\begin{align*}
    h(\phi(X, \xi)) = \rho(X, h(\xi)),
\end{align*}
for all $X \in \grpG$ and $\xi \in \vinsT$.
\end{lemma}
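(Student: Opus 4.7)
The plan is to verify the two claims of the lemma separately: that $\rho$ is indeed a right action, and then the equivariance of $h$. Both reduce to direct computation, the key observation being that $\rho$ depends on $X = (A, \beta, B, Q_i)$ only through the rotational pieces $R_{Q_i}$ of the $\SOT(3)$ components.

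For the action axioms, I would first note that $\grpG$ is a direct product, so multiplication acts componentwise; in particular, the $i$-th $\SOT(3)$ component of $XY$ is $Q_i^X Q_i^Y$, whose rotation part factors as $R_{Q_i^X} R_{Q_i^Y}$. Transposing reverses the order, so
\begin{align*}
    \rho(XY, y)_i = (R_{Q_i^X} R_{Q_i^Y})^\top y_i = R_{Q_i^Y}^\top R_{Q_i^X}^\top y_i = \rho(Y, \rho(X, y))_i,
\end{align*}
which is the right-action compatibility. The identity axiom is immediate since $R_{\id_{\SOT(3)}} = I_3$.

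For equivariance, I would work componentwise on each $h^k$. The first step is to simplify the composite camera pose after acting by $\phi$. Using $\varphi^\imu(A, \xi_\imu)$ (Eq.~\eqref{eq:inertial_phi_dfn}) to identify the new IMU pose with the $\SE(3)$ product $P_\imu P_A$ and the new camera extrinsics as $T' = P_A^{-1} T B$, the $P_A$ factors telescope:
\begin{align*}
    P_\imu' T' = P_\imu P_A P_A^{-1} T B = P_\imu T B.
\end{align*}
Substituting the transformed landmark $p_i' = P_\imu T B Q_i^{-1} T^{-1} P_\imu^{-1}(p_i)$ then gives the crucial cancellation
\begin{align*}
    (P_\imu' T')^{-1} p_i' = (P_\imu T B)^{-1} P_\imu T B Q_i^{-1} T^{-1} P_\imu^{-1}(p_i) = Q_i^{-1} (P_\imu T)^{-1}(p_i).
\end{align*}
Unpacking the $\SOT(3)$ shorthand yields $Q_i^{-1}(P_\imu T)^{-1}(p_i) = c_{Q_i}^{-1} R_{Q_i}^\top (P_\imu T)^{-1}(p_i)$. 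Because $c_{Q_i}^{-1} > 0$ and $R_{Q_i}^\top \in \SO(3)$, the projection $\pi_{\Sph^2}$ absorbs the positive scalar and commutes with the rotation, delivering $h^k(\phi(X,\xi)) = R_{Q_k}^\top h^k(\xi) = \rho(X, h(\xi))_k$ as required.

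The main obstacle is really just bookkeeping: one must see that the conjugation pattern $P_\imu T B \,Q_i^{-1}\, T^{-1} P_\imu^{-1}$ appearing in the landmark component of $\phi$ is engineered precisely so that pre-composing with $(P_\imu' T')^{-1}$ collapses everything to a clean $Q_i^{-1}$. Once that telescoping is recognised, equivariance reduces to the familiar fact that positive scaling and rotation both commute with normalisation onto $\Sph^2$, which explains why the $c_{Q_i}$ scale factor of $\SOT(3)$ is invisible at the measurement level while still being free to act on the landmark state.
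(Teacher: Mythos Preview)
Your proposal is correct and follows essentially the same approach as the paper's proof: verifying the action axioms directly (the paper merely calls this ``trivial''), then computing $h^k(\phi(X,\xi))$ componentwise and observing the telescoping $(P_\imu P_A)(P_A^{-1}TB) = P_\imu TB$ so that only $Q_k^{-1}$ survives before the sphere projection. Your write-up is in fact slightly cleaner in isolating the telescoping step and in explaining why the scale factor $c_{Q_k}$ disappears under $\pi_{\Sph^2}$.
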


Overall, the structure presented provides a complete symmetry of the VIO dynamics and measurement.
This is summarised by the diagram
\begin{equation*}
\begin{gathered}
   \xymatrix{
       \vinsT \ar[r]^{\alpha_S} \ar[d]_{\phi_X} \ar@/^1.0pc/[rr]^{h} &
       \vinsT \ar[r]^h   \ar[d]_{\phi_X} &
       \visualN            \ar[d]_{\rho_X} \\
       \vinsT \ar[r]^{\alpha_S} \ar@/_1pc/[rr]_{h} &
       \vinsT \ar[r]^h &
       \visualN
   }
\end{gathered}
\end{equation*}
which commutes for any $X \in \grpG$ and $S \in \SE_{\eb_3}(3)$.

The proposed EqF formulation has two key differences to our previous work \cite{2021_vangoor_EquivariantFilterVisual}.
First, while both the proposed EqF and the EqF presented in \cite{2021_vangoor_EquivariantFilterVisual} feature the same $\SE_2(3)$ symmetry associated with the inertial states, the EqF presented in \cite{2021_vangoor_EquivariantFilterVisual} did not include the translation and yaw components directly.
Instead, the translation and yaw components (corresponding to the invariance $\alpha_S$) were excluded from the filter and a secondary \emph{bundle lift} was used to compute an update to the these states separately from the EqF equations.
The advantage of the new formulation is that the EqF now provides a covariance estimate for the translation and yaw.
The second key difference is the inclusion of the camera extrinsics in the VI-SLAM state with an associated $\SE(3)$ symmetry.
In \cite{2021_vangoor_EquivariantFilterVisual}, the camera extrinsics were simply assumed to be fixed constant.

%%%%%%%%%%%%%%%%%%%%%%%%%%%%%%%%%%%%%%%%%%%%%%%%%%
\section{Equivariant Filter for VIO}
%%%%%%%%%%%%%%%%%%%%%%%%%%%%%%%%%%%%%%%%%%%%%%%%%%

While the performance of an EKF for VIO depends largely on what coordinates are chosen, the performance of an EqF depends instead on the symmetry used.
The VI-SLAM symmetry yields a few key advantages.

The EqF equipped with the VI-SLAM symmetry is a consistent estimator for VIO.
In Lemma \ref{lem:lift}, Lemma \ref{lem:state_action} is used to lift the system dynamics from the state space manifold to the chosen symmetry group $\grpG$, and the resulting lift $\Lambda$ is shown to be invariant to the reference frame transformation $\alpha$ \eqref{eq:alpha_invariance}.
Then, since both the group action $\phi$ \eqref{eq:total_state_action} and the lift $\Lambda$ \eqref{eq:lift_dfn} are compatible with $\alpha$ \eqref{eq:alpha_invariance} (c.f. Lemmas \ref{lem:compatible_phi_alpha}, \ref{lem:lift}), the proposed EqF naturally also respects the invariance.
Specifically, if the Riccati matrix $\Sigma$ of the EqF is treated as a covariance, then the Fisher information $\Sigma^{-1}$ is non-increasing along the unobservable directions spanned by the differential of $\alpha$, and therefore the EqF is consistent by \cite[Theorem 2]{2019_brossard_ExploitingSymmetriesDesign}.

The EqF uses a higher order (more precise) output approximation than that available to the EKF.
% The equivariance of the output function shown in  allows us to eliminate second-order error terms from the output approximation used in the EqF.
The system output function is shown to be equivariant in Lemma \ref{lem:output_action}, and the local coordinates $\vartheta$ chosen in Section \ref{sec:local_coords} are normal with respect to the group action $\phi$ \eqref{eq:total_state_action}.
Therefore, the equivariant output approximation $C_t^\star$ is available \cite[Lemma V.3]{2023_vangoor_EquivariantFilterEqF}, and, unlike the usual first-order approximation of the output function provided by an EKF, it has no second order error terms.
That is,
\begin{align*}
    y - h(\hat{\xi}) =  C_t^\star \varepsilon + O(\vert \varepsilon \vert^3),
\end{align*}
where $y$ is the system measurement, $\hat{\xi}$ is the EqF state estimate, and $\varepsilon$ is the local error coordinates of the EqF.

\subsection{Lifted Dynamics and Consistency}

The existence of a transitive action by the VI-SLAM group on the VI-SLAM manifold guarantees the existence of a lift for the system dynamics \eqref{eq:vins_dynamics} in the sense of \cite{2020_mahony_EquivariantSystemsTheory}.
Although \cite{2020_mahony_EquivariantSystemsTheory} provides a constructive algorithm to build lifts, in practice it is easiest to guess the lift and then prove it satisfies the required conditions.

\begin{lemma}\label{lem:lift}
    The map $\Lambda : \vinsT \times (\R^3 \times \R^3) \to \gothg$, given by
    \begin{align}
        \label{eq:lift_dfn}
        \Lambda & ((\xi_\imu, b, T, p_i), (\Omega, a)) \\
        &:= \left(
            (\Omega, R_\imu^\top v, a)^\wedge_{\se_2(3)}, \;
            0 , \;
            (\Omega_C, v_C)^\wedge_{\se(3)},
        \right. \notag \\ &\hspace{1cm} \left.
            \left( \Omega_C + \frac{q_i^\times v_C}{\Vert q_i \Vert^2}, \frac{q_i^\top v_C}{\Vert q_i \Vert^2} \right)_i
        \right), \notag \\
        q_i &:= (P T)^{-1} (p_i),
        \quad
        (\Omega_C, v_C)^\wedge_{\se(3)} := \Ad_{T^{-1}} (\Omega, R_\imu^\top v)^\wedge_{\se(3)}, \notag
    \end{align}
    is a lift \cite{2020_mahony_EquivariantSystemsTheory} of the system dynamics \eqref{eq:vins_dynamics}.
    That is,
    \begin{align}
        \tD_{E} |_\id \phi_{(\xi_\imu, b, T, p_i)} (E) \Lambda((\xi_\imu, b, T, p_i), (\Omega, a)) = f_{(\Omega, a)} (\xi_\imu, b, T, p_i)
        \label{eq:lift_condition}
    \end{align}
    Moreover, $\Lambda$ is invariant with respect to the action $\alpha$ \eqref{eq:alpha_invariance}; i.e.
    \begin{align*}
        \Lambda(\alpha(S, (\xi_\imu, b, T, p_i)), (\Omega, a)) = \Lambda((\xi_\imu, b, T, p_i), (\Omega, a)),
    \end{align*}
    for all $S \in \SE_{\eb_3}(3)$.
\end{lemma}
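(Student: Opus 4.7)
The strategy is to verify the two properties independently: first the lift identity \eqref{eq:lift_condition}, then $\alpha$-invariance. Since both $\phi$ and $\Lambda$ decompose across the factors $\SE_2(3) \times \R^6 \times \SE(3) \times \SOT(3)^n$ of $\grpG$, the verification of \eqref{eq:lift_condition} reduces to four calculations, which I would carry out in order of increasing complexity. For the $\SE_2(3)$ factor, differentiating $\varphi^\imu(A, \xi_\imu) = (R_\imu R_A, x_\imu + R_\imu x_A, v_\imu + R_\imu v_A)$ at $A=\id$ along $(\Omega, R_\imu^\top v, a)^\wedge_{\se_2(3)}$ yields $(R_\imu \Omega^\times, R_\imu R_\imu^\top v, R_\imu a) = (R_\imu \Omega^\times, v, R_\imu a)$, which reproduces the navigation part of $f_{(\Omega, a)}$ (with $(\Omega,a)$ interpreted as the bias-compensated, gravity-corrected input, consistent with the convention of \cite{2014_barrau_InvariantParticleFiltering}). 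The $\R^6$ bias factor is immediate: $\Lambda$ contributes $0$ and $\dot b_\imu = 0$.

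For the $\SE(3)$ extrinsic calibration, the action $T \mapsto P_A^{-1} T B$ differentiated at $(A,B)=(\id,\id)$ gives $-(\Omega, R_\imu^\top v)^\wedge_{\se(3)} T + T (\Omega_C, v_C)^\wedge_{\se(3)}$. Substituting $(\Omega_C, v_C)^\wedge_{\se(3)} = \Ad_{T^{-1}}(\Omega, R_\imu^\top v)^\wedge_{\se(3)}$ turns the second term into $(\Omega, R_\imu^\top v)^\wedge_{\se(3)} T$, which cancels the first, matching $\dot T = 0$. The landmark factor is the main obstacle: differentiating $p_i \mapsto P_\imu T B Q_i^{-1} T^{-1} P_\imu^{-1}(p_i)$ along the proposed lift gives a sum of three contributions (from $A$, $B$, and $Q_i$). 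Using $q_i = (P_\imu T)^{-1}(p_i)$ and pushing the $A$ and $B$ contributions through the conjugation by $P_\imu T$ recasts them in the camera frame as $-(\Omega_C)^\times q_i - v_C$, i.e.\ the rigid-body landmark flow in the camera frame. The $Q_i^{-1}$ contribution, with $U_{Q_i} = (\Omega_C + q_i^\times v_C/\|q_i\|^2, q_i^\top v_C/\|q_i\|^2)^\wedge_{\sot(3)}$, acts on $q_i$ via $\ddt|_{t=0} c^{-1} R^\top q_i = -\Omega_U^\times q_i - s_U q_i$; substituting the chosen $(\Omega_U, s_U)$ and simplifying $q_i^\times (q_i^\times v_C)/\|q_i\|^2 + q_i (q_i^\top v_C)/\|q_i\|^2 = -v_C$ (Lagrange identity on $\R^3$) produces exactly $+(\Omega_C)^\times q_i + v_C$, cancelling the previous contribution and matching $\dot p_i = 0$.

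For the invariance under $\alpha$, I would track each summand of $\Lambda$ under the change of reference $(\xi_\imu, b, T, p_i) \mapsto (S^{-1} P_\imu, R_S^\top v_\imu, b, T, S^{-1}(p_i))$. The $\se_2(3)$ entry uses $R_\imu^\top v_\imu$, which maps to $(R_S^\top R_\imu)^\top (R_S^\top v_\imu) = R_\imu^\top v_\imu$, so this entry is invariant; the bias and $\se(3)$ entries depend only on $\Omega$, $R_\imu^\top v$, and $T$, all invariant. For the landmark entries, one checks $q_i \mapsto (S^{-1} P_\imu T)^{-1} S^{-1}(p_i) = T^{-1} P_\imu^{-1} S S^{-1}(p_i) = q_i$, and since $\Omega_C, v_C$ are built from already-invariant data, so is each $U_{Q_i}$. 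Combining the four components gives the claimed $\Lambda(\alpha(S,\xi),(\Omega,a)) = \Lambda(\xi,(\Omega,a))$.

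The hard part is the landmark cancellation: the three contributions have different algebraic forms ($\SE_2(3)$-linear in $A$, $\SE(3)$-linear in $B$, $\SOT(3)$-linear in $Q_i$) and live on distinct factors of the group, so matching them requires transporting all three into the common camera frame via $P_\imu T$ and then invoking the Lagrange identity to close the cancellation. Once this is done, both claims follow from the component-wise calculations above.
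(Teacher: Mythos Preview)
Your componentwise verification is exactly the argument the paper intends; in fact the paper's own proof simply cites the analogous computation in \cite{2021_goor_ConstructiveObserverDesign} and declares the $\alpha$-invariance ``straightforward,'' writing none of it out. The $\SE_2(3)$, $\R^6$, $\SE(3)$, and invariance pieces are all fine as sketched, and your parenthetical about interpreting $(\Omega,a)$ as bias-compensated and gravity-corrected is the right way to reconcile the lift with the stated dynamics.

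There is one slip in the landmark calculation. The landmark component of $\phi$, namely $p_i \mapsto P_\imu T B Q_i^{-1} T^{-1} P_\imu^{-1}(p_i)$, depends only on $B$ and $Q_i$: the $P_\imu$ and $T$ appearing here are the \emph{state} variables, not the group components, so there is no $A$-contribution at all. The cancellation is therefore between two terms, not three. Moreover your signs are swapped: the $B$-contribution in the camera frame is $+\Omega_C^\times q_i + v_C$, and the $Q_i^{-1}$-contribution is $-\Omega_U^\times q_i - s_U q_i$, which the triple-product identity (written correctly as $(q_i^\times v_C)^\times q_i/\|q_i\|^2 + (q_i^\top v_C)q_i/\|q_i\|^2 = v_C$, not the version you stated) reduces to $-(\Omega_C^\times q_i + v_C)$. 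The two sign errors cancel, so your conclusion survives, but the intermediate claims---three contributions, and which piece carries which sign---should be corrected.
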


\begin{proof}
The proof that $\Lambda$ satisfies the lift condition \eqref{eq:lift_condition} closely follows the proof of \cite[Lemma 4.4]{2021_vangoor_ConstructiveObserverDesign}, and the invariance of $\Lambda$ to $\alpha$ is straightforward.
Both have been omitted to save space.
\end{proof}

%%%%%%%%%%%%%%%%%%%%%%%%%%%%%%%%%%%%%
\subsection{Origin Choice and Local Coordinates}
\label{sec:local_coords}
%%%%%%%%%%%%%%%%%%%%%%%%%%%%%%%%%%%%%

% The EqF design procedure requires a choice of origin configuration and local coordinates.
Let $\mr{\xi}  = (\mr{\xi}_\imu, \mr{b}_\imu, \mr{T}, \mr{p}_i) \in \vinsT$ denote the chosen, fixed \emph{origin configuration}, with $\mr{p}_i := \mr{P}_\imu \mr{T} \eb_3$ for every $i$.
For generality, we leave the remaining terms $\mr{\xi}, \mr{b}_\imu, \mr{T}$ arbitrary.
% The filter state for the EqF is an element of the VI-SLAM group, $\hat{X} \in \vinsG$, and the associated state estimate is obtained by applying the group action $\hat{\xi} =  \phi(\hat{X}, \mr{\xi})$ \eqref{eq:total_state_action} \cite{2020_mahony_EquivariantSystemsTheory}.

Define the map $\vartheta : \calU_{\mr{\xi}} \subset \vinsT \to \R^{21+3n}$ by
\begin{align} \label{eq:state_local_coords}
    &\vartheta(\xi_\imu, b, T, p_i) \\
    &:= \begin{pmatrix}
        \log_{\SE_2(3)}(\mr{R}_\imu^\top R_\imu, \mr{R}_\imu^\top(x_\imu - \mr{x}_\imu), \mr{R}_\imu^\top(v_\imu - \mr{v}_\imu))^\vee \\[0.5em]
        b_\imu - \mr{b}_{\imu} \\[0.5em]
        \log_{\SE(3)}((\mr{P}_\imu \mr{T})^{-1} (P_{\imu} T))^\vee \\[0.5em]
        \varsigma_{\SOT(3)}((P_\imu T_\imu)^{-1} (p_1))\\
        \vdots \\[0.5em]
        \varsigma_{\SOT(3)}((P_\imu T_\imu)^{-1} (p_n))
    \end{pmatrix}, \notag
\end{align}
to be the coordinate chart for $\vinsT$ about $\mr{\xi}$, where $\calU_{\mr{\xi}}$ is a large neighbourhood of $\mr{\xi}$ and $\varsigma_{\SOT(3)}$ is the polar parametrisation \eqref{eq:visual_param_sot3}.
Then $\vartheta$ provides normal coordinates for $\vinsT$ about $\mr{\xi}$ with respect to the action $\phi$.

% Note that $\vartheta(\mr{\imu}) = 0$.

%%%%%%%%%%%%%%%%%%%%%%%%%%%%%%%%%%%%%
\subsection{EqF Dynamics}
%%%%%%%%%%%%%%%%%%%%%%%%%%%%%%%%%%%%%

Denote the true state of the system $\xi = (\xi_\imu, b, T, p_i) \in \vinsT$,
let the input signals be $(\Omega, a) \in \R^3 \times \R^3$,
and denote the measurements as $y = h(\xi)$.

Let $\hat{X} = (\hat{A}, \hat{\beta}, \hat{B}, \hat{Q}_i) \in \grpG$ be the observer state and $\Sigma \in \Sym_+(21+3n)$ be the Riccati matrix, where $\Sym_+(k)$ denotes the set of positive definite $k \times k$ matrices.
Define $\mr{A}_t, B_t, C_t^\star$ to be the state, input, and equivariant output matrices of the EqF as defined in \cite{2023_vangoor_EquivariantFilterEqF}, respectively, and let $\tD_E |_\id \phi_{\mr{\xi}}(E)^\dag$ be a fixed right-inverse of $\tD_E |_\id \phi_{\mr{\xi}}(E)$.
Then the EqF dynamics \cite{2023_vangoor_EquivariantFilterEqF} are defined to be
\begin{align}
    \label{eq:eqf_equations}
    \dot{\hat{X}} &= \hat{X} \Lambda( \phi(\hat{X}, \mr{\xi}), (\Omega, a)) - \Delta \hat{X}, \\
    \Delta &= \tD_E |_\id \phi_{\mr{\xi}}(E)^\dagger \cdot \tD_\xi |_{\mr{\xi}} \vartheta(\xi)^{-1} \Sigma {C_t^\star}^\top N_t^{-1} (y - h(\hat{\xi})), \notag \\
    \dot{\Sigma} &= \mr{A}_t \Sigma  + \Sigma \mr{A}_t^\top + M_\varepsilon +  B_t M_t^m B_t^\top - \Sigma {C_t^\star}^\top N_t^{-1} C^\star_t \Sigma, \notag
\end{align}
where $\Delta \in \gothg$ is the EqF correction term, and
\begin{itemize}
    \item $\Sigma(0) = \Sigma_0 \in \Sym_+(21+3n)$ is the initial Riccati gain,
    \item $M_\varepsilon \in \Sym_+(21+3n)$ is the state gain,
    \item $M_t^m \in \Sym_+(6)$ is the input gain,
    \item $N_t \in \Sym_+(2n)$ is the output gain,
\end{itemize}
following their descriptions in \cite{2023_vangoor_EquivariantFilterEqF}.

At any time, the EqF state estimate is given by
\begin{align}
    \hat{\xi} = (\hat{\xi}_\imu, \hat{b}, \hat{T}, \hat{p}_i) := \phi(\hat{X}, \mr{\xi}). \label{eq:estimated-state}
\end{align}
The EqF is designed around the definition of a global error state, $e = \phi(\hat{X}^{-1}, \xi)$.
The filter is developed by expressing the dynamics and measurements in terms of $e$ and linearising through the local coordinates $\vartheta$.
Figure \ref{fig:eqf_overview} shows how the true state $\xi$, estimated state $\hat{\xi}$, observer state $\hat{X}$, and global error $e$ are related.
From the probabilistic point of view, the Riccati matrix $\Sigma$ may then be thought of as the covariance of the linearised error $\varepsilon = \vartheta(e)$; that is, the EqF estimates the distribution of the true state $\xi$ to satisfy
\begin{align*}
    \vartheta(\phi(\hat{X}^{-1}, \xi)) \sim N(0, \Sigma).
\end{align*}
For a formal discussion of the derivation and probabilistic interpretation of the EqF, we refer the reader to \cite{2023_vangoor_EquivariantFilterEqF,2022_ge_EquivariantFilterDesign}.

\begin{figure}[htb]
    \centering
    \includegraphics[width=0.5\linewidth]{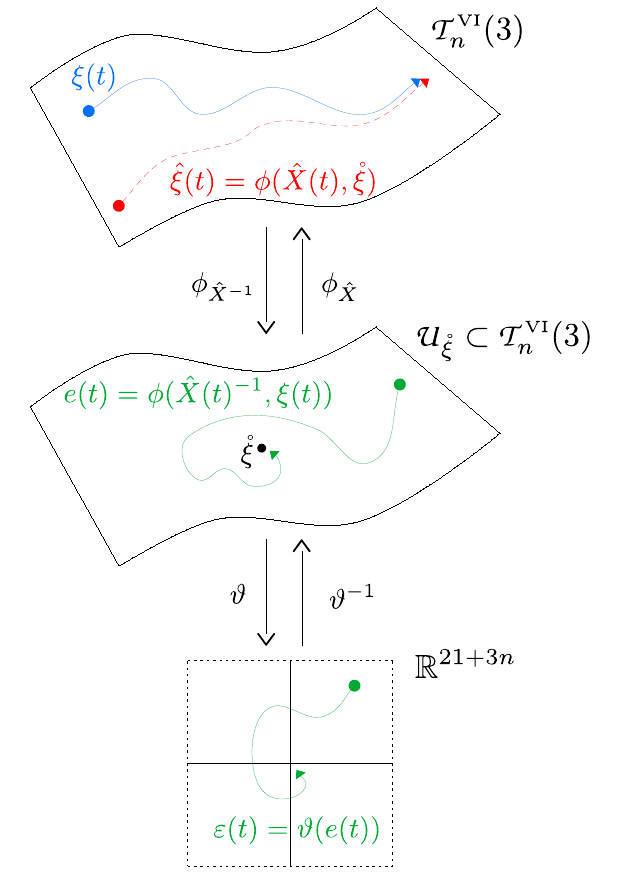}
    \caption{An illustration of the relationships between the true state $\xi$, estimated state $\hat{\xi}$, observer state $\hat{X}$, and global error $e$.
    The true and estimated state are related to the error and the origin by the transformation $\phi_{\hat{X}}$.
    The error $e$ is linearised through local coordinates to yield $\varepsilon = \vartheta(e)$.}
    \label{fig:eqf_overview}
\end{figure}

%%%%%%%%%%%%%%%%%%%%%%%%%%%%%%%%%%%%%%%%%%%%%%%%%%
\section{Experimental Results}
%%%%%%%%%%%%%%%%%%%%%%%%%%%%%%%%%%%%%%%%%%%%%%%%%%

\begin{figure*}[t]
    \centering
    \includegraphics[width=1.0\textwidth]{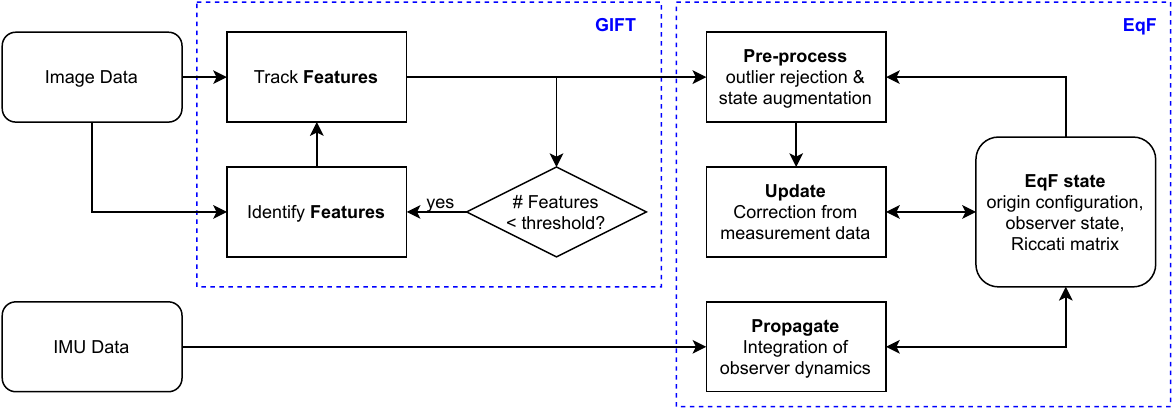}
    \caption{An overview of EqVIO as a system. The key components can be split into the front-end (GIFT) and the back-end (EqF).}
    \label{fig:eqvio-diagram}
\end{figure*}

The EqF equations \eqref{eq:eqf_equations} are discretised and implemented in c++17 using the Eigen matrix library \cite{2010_guennebaud_EigenV3}.
Visual measurements of landmarks are obtained using GIFT \cite{2021_vangoor_ConstructiveObserverDesign, 2021_vangoor_GeneralInvariantFeature} to identify and track image features.
We refer to the resulting system as \emph{EqVIO}.
The four key steps in EqVIO are
\begin{itemize}
    \item \textbf{Features:} The number of features tracked is kept to a fixed limit that can be set by the user. When the number of features that are successfully being tracked falls below a chosen threshold, new image features are identified.
    \item \textbf{Preprocessing:} If a feature is unable to be tracked or is identified as an outlier, the corresponding landmark is removed from the state. If any new features have been identified, they are added to the state with a constant initial covariance.
    \item \textbf{Propagation:} Upon receiving an IMU signal, the EqF state and Ricatti equation are integrated without the correction terms.
    \item \textbf{Update:} After augmenting the state and removing outliers, the EqF state and Ricatti equation are integrated exclusively with the correction terms.
\end{itemize}

Figure \ref{fig:eqvio-diagram} provides an overview of the system components and how each of the steps are linked.

%%%%%%%%%%%%%%%%%%%%%%%%%%%%%%%%%
\subsection{Performance Comparisons}
%%%%%%%%%%%%%%%%%%%%%%%%%%%%%%%%%

We compared the performance of EqVIO to other VIO systems on two popular public datasets: EuRoC \cite{2016_burri_EuRoCMicroAerial} and UZH FPV \cite{2019_delmerico_AreWeReady}.
Table \ref{tab:algorithms} lists the algorithms considered along with information about the back-end and front-end systems used, including GFFT \cite{1994_shi_GoodFeaturesTrack}, LKT \cite{2004_baker_LucasKanade20Years}, FAST \cite{2006_rosten_MachineLearningHighSpeed}, ORB \cite{2011_rublee_ORBEfficientAlternative}, and BRISK \cite{2011_leutenegger_BRISKBinaryRobust}.

\begin{table}
\caption{Popular VIO algorithms used to compare with the performance of EqVIO.}
\label{tab:algorithms}
\centering
\begin{tabular}{c|c|c|c}
    Algorithm & Ref. & back-end & front-end \\
    \hline
    EqVIO & * & EqF & GFTT + LKT \\
    ROVIO & \cite{2017_bloesch_IteratedExtendedKalman} & iterated EKF & image patches \\
    OpenVINS & \cite{2020_geneva_OpenVINSResearchPlatform} & MSCKF + EKF & FAST + LKT \\
    VINS-mono & \cite{2018_qin_VinsmonoRobustVersatile} & sliding-window & GFTT + LKT + ORB \\
    MSCKF & \cite{2007_mourikis_MultistateConstraintKalman} & MSCKF & FAST + LKT \\
    OKVIS & \cite{2015_leutenegger_KeyframebasedVisualInertial} & sliding-window & GFTT + BRISK \\
\end{tabular}
\end{table}

Where stated, the algorithm performance results have been obtained from the recent benchmark study by Delmerico and Scaramuzza \cite{2018_delmerico_BenchmarkComparisonMonocular}.
Otherwise, the algorithms were compiled using the suggested default configuration and run on an Ubuntu 20.04 desktop computer equipped with an AMD Ryzen 7 3700X 8-Core processor and 16~GB memory.
The tuning parameters of the algorithms were changed between the EuRoC and UZH FPV datasets, but kept constant across all sequences in those datasets. Additional system capabilities are including loop closure and map reuse were disabled where relevant to ensure a fair comparison between algorithms.
The key sensor characteristics and tuning parameters used for EqVIO are listed in Table \ref{tab:sensor_chars}.
For more detailed tuning parameters, please refer to the configuration files provided in the open-source repository.

\begin{table*}
    \centering
    \caption{The configuration of sensor parameters used across the experiments.}
\begin{tabular}{|lr|c|c|c|}
    \hline
    Sensor Characteristic & [Unit]
    & {EuRoC}                            & {UZH FPV}           & Simulation                         \\
    % \rotatebox{\algorithimRotate}{Simulation} & \rotatebox{\algorithimRotate}{EuRoC} & \rotatebox{\algorithimRotate}{UZH FPV} \\
    \hline
    gyroscope random walk        & [ rad/s/$\sqrt{\mathrm{Hz}}$     ] & $2.43 \times 10^{-4}$ & $1.19 \times 10^{-3}$ & $1.0 \times 10^{-4}$ \\
    accelerometer random walk    & [ m/s/$\sqrt{\mathrm{Hz}}$       ] & $1.24 \times 10^{-2}$ & $3.26 \times 10^{-5}$ & $1.0 \times 10^{-3}$ \\
    gyroscope bias diffusion     & [ rad/s$^2$/$\sqrt{\mathrm{Hz}}$ ] & $1.34 \times 10^{-4}$ & $2.00 \times 10^{-4}$ & $1.0 \times 10^{-5}$ \\
    accelerometer bias diffusion & [ m/s$^3$/$\sqrt{\mathrm{Hz}}$   ] & $4.46 \times 10^{-3}$ & $6.30 \times 10^{-3}$ & $1.0 \times 10^{-4}$ \\
    IMU rate                     & [ Hz                             ] & 200                   & 500                   & 500                  \\
    image resolution             & [ px                             ] & $752\times 480$       & $640\times 480$       & $752\times 480$      \\
    camera focal length          & [ px                             ] & (458.654, 457.296)    & (275.460, 274.995)    & (458.654, 457.296)   \\
    image centre                 & [ px                             ] & (367.215, 248.375)    & (315.958, 242.712)    & (367.215, 248.375)   \\
    feature noise std. dev.      & [ px                             ] & 1.9                   & 3.8                   & 0.5                  \\
    camera frame rate            & [ Hz                             ] & 20                    & 30                    & 30                   \\
    \hline
\end{tabular}
\label{tab:sensor_chars}
\end{table*}

Table \ref{tab:rmse-euroc} lists the root mean square error (RMSE) of the position estimates of each of the systems in Table \ref{tab:algorithms} on the EuRoC dataset.
EqVIO achieves the best performance in five of the sequences, and also achieves the best mean performance alongside OpenVINS.
Additionally, Table \ref{tab:timing-euroc} lists the processing time taken per frame for each of the algorithms tested on the authors' desktop.
Table \ref{tab:config-euroc} lists the state length parameters used to achieve the results for each of the algorithms.
On average, EqVIO is faster than the next fastest algorithm by a factor of 2.14 on the EuRoC dataset.

\begin{table}[htb]
\begin{minipage}{0.50\textwidth}    
\centering
\caption{RMSE of position estimates in metres on the EuRoC \cite{2016_burri_EuRoCMicroAerial} dataset for each of the systems in Table \ref{tab:algorithms}.}
\label{tab:rmse-euroc}
\begin{tabular}{c|cccccc}
    Algorithm & \rotatebox{\algorithimRotate}{EqVIO} & \rotatebox{\algorithimRotate}{ROVIO} & \rotatebox{\algorithimRotate}{OpenVINS} & \rotatebox{\algorithimRotate}{VINS-Mono} & \rotatebox{\algorithimRotate}{MSCKF} & \rotatebox{\algorithimRotate}{OKVIS} \\
    \hline
    Source & * & \cite{2018_delmerico_BenchmarkComparisonMonocular} & * & * & \cite{2018_delmerico_BenchmarkComparisonMonocular} & \cite{2018_delmerico_BenchmarkComparisonMonocular} \\
    \hline
    MH\_01 & 0.14            & 0.21            & \hlresult{0.09} & 0.24            & 0.42 & 0.16 \\
    MH\_02 & \hlresult{0.20} & 0.25            & 0.24            & 0.21            & 0.45 & 0.22 \\
    MH\_03 & \hlresult{0.09} & 0.25            & 0.16            & 0.15            & 0.23 & 0.24 \\
    MH\_04 & \hlresult{0.22} & 0.49            & 0.31            & 0.23            & 0.37 & 0.34 \\
    MH\_05 & \hlresult{0.28} & 0.52            & 0.39            & \hlresult{0.28} & 0.48 & 0.47 \\
    V1\_01 & \hlresult{0.06} & 0.10            & 0.08            & 0.07            & 0.34 & 0.09 \\
    V1\_02 & 0.14            & 0.10            & \hlresult{0.09} & 0.28            & 0.20 & 0.20 \\
    V1\_03 & 0.19            & 0.14            & \hlresult{0.06} & 0.16            & 0.67 & 0.24 \\
    V2\_01 & 0.11            & 0.20            & 0.08            & \hlresult{0.07} & 0.10 & 0.13 \\
    V2\_02 & 0.17            & 0.14            & \hlresult{0.07} & 0.16            & 0.16 & 0.16 \\
    V2\_03 & 0.20            & \hlresult{0.14} & 0.15            & 0.22            & 1.13 & 0.29 \\
    \hline
    Mean   & \hlresult{0.16} & 0.23            & \hlresult{0.16} & 0.19            & 0.41 & 0.23 \\
\end{tabular}
\end{minipage}
\hspace*{0.05\textwidth}
\begin{minipage}{0.35\textwidth}
\caption{Average time taken to process each frame in ms on the EuRoC \cite{2016_burri_EuRoCMicroAerial} dataset for some of the systems in Table \ref{tab:algorithms}.}
\label{tab:timing-euroc}
\centering
\begin{tabular}{c|cccccc}
    Algorithm & \rotatebox{\algorithimRotate}{EqVIO} & \rotatebox{\algorithimRotate}{ROVIO} & \rotatebox{\algorithimRotate}{OpenVINS} & \rotatebox{\algorithimRotate}{VINS-Mono} \\
    \hline
    Source & * & * & * & * \\
    \hline
    MH\_01 & \hlresult{5.13} & 13.42 & 10.27 & 36.61 \\
    MH\_02 & \hlresult{4.87} & 15.51 & 10.84 & 34.86 \\
    MH\_03 & \hlresult{5.99} & 16.10 & 11.92 & 34.35 \\
    MH\_04 & \hlresult{5.88} & 19.27 & 11.40 & 34.18 \\
    MH\_05 & \hlresult{5.36} & 19.38 & 11.46 & 35.87 \\
    V1\_01 & \hlresult{5.87} & 19.91 & 13.45 & 33.92 \\
    V1\_02 & \hlresult{4.78} & 21.74 & 12.70 & 26.31 \\
    V1\_03 & \hlresult{5.20} & 18.02 & 11.07 & 24.84 \\
    V2\_01 & \hlresult{5.74} & 35.99 & 12.01 & 42.78 \\
    V2\_02 & \hlresult{5.33} & 19.94 & 11.59 & 28.10 \\
    V2\_03 & \hlresult{5.59} & 20.70 & 10.52 & 22.99 \\
    \hline
    Mean & \textbf{5.43} & 20.00 & 11.57 & 32.26
\end{tabular}
\end{minipage}
\end{table}

\begin{table}
    \caption{State length parameters of the VIO algorithms (EuRoC)}
    \label{tab:config-euroc}
    \centering
\begin{tabular}{c|c|c|c}
    Algorithm & Ref.                                                & \# features         & \# states \\
    \hline
    EqVIO     & *                                                   & 40                  & 1         \\
    ROVIO     & \cite{2017_bloesch_IteratedExtendedKalman}          & 25                  & 1         \\
    OpenVINS  & \cite{2020_geneva_OpenVINSResearchPlatform}         & 50(SLAM) + 200(MSC) & 11        \\
    VINS-mono & \cite{2018_qin_VinsmonoRobustVersatile}             & 150                 & variable  \\
    MSCKF     & \cite{2007_mourikis_MultistateConstraintKalman}     & variable            & 30        \\
    OKVIS     & \cite{2015_leutenegger_KeyframebasedVisualInertial} & 400                 & 5         \\
\end{tabular}
\end{table}

Table \ref{tab:rmse-uzhfpv} lists the RMSE of the position estimates for a subset of the systems in Table \ref{tab:algorithms} on some sequences from the UZH FPV dataset.
To ensure a fair comparison, the sequences considered here are those for which OpenVINS has publicly listed tuning parameters at the time of writing.
The default tuning parameters were used for ROVIO, and its performance could perhaps be improved with tuning specific to the challenging dataset.
The tuning parameters for VINS-Mono were taken from those used in VINS-Stereo (\url{https://github.com/rising-turtle/VINS-Stereo}) but with the second camera disabled.
Over the selected sequences, EqVIO achieves the best performance in four of the ten sequences, and achieves the best mean performance overall.
Additionally, from Table \ref{tab:timing-uzhfpv} it is clear that EqVIO is also significantly faster than any of the other algorithms considered.
Table \ref{tab:config-uzhfpv} lists the state length parameters used to achieve the results for each of the algorithms.
On average, EqVIO is faster than the next fastest algorithm by a factor of 5.3 on the UZH FPV dataset.

\begin{table}[htb]
\begin{minipage}{0.45\textwidth}    
\centering
\caption{RMSE of position estimates in metres on the UZH FPV \cite{2019_delmerico_AreWeReady} dataset for some of the systems in Table \ref{tab:algorithms}.}
\label{tab:rmse-uzhfpv}
\begin{tabular}{c|ccccc}
    Algorithm & \rotatebox{\algorithimRotate}{EqVIO} & \rotatebox{\algorithimRotate}{ROVIO} & \rotatebox{\algorithimRotate}{OpenVINS} & \rotatebox{\algorithimRotate}{VINS-Mono}
    % & \rotatebox{\algorithimRotate}{MSCKF}
    % & \rotatebox{\algorithimRotate}{OKVIS}
    \\ \hline
    Source              & *               & *    & *               & *               \\
    \hline
    indoor\_45\_12      & 0.26            & x    & \hlresult{0.24} & 0.39            \\
    indoor\_45\_13      & \hlresult{0.28} & 1.53 & 0.29            & 0.74            \\
    indoor\_45\_14      & 0.27            & 2.63 & \hlresult{0.22} & x               \\
    indoor\_45\_2       & 0.31            & x    & \hlresult{0.15} & 0.62            \\
    indoor\_45\_4       & 0.33            & x    & \hlresult{0.28} & 1.11            \\
    indoor\_forward\_10 & \hlresult{0.15} & 1.23 & 0.34            & 0.53            \\
    indoor\_forward\_5  & 0.29            & 2.43 & 0.16            & \hlresult{0.12} \\
    indoor\_forward\_6  & 0.23            & 0.65 & \hlresult{0.17} & 0.28            \\
    indoor\_forward\_7  & \hlresult{0.40} & 1.09 & 0.42            & 0.54            \\
    indoor\_forward\_9  & \hlresult{0.17} & x    & 0.50            & 0.56            \\
    \hline
    Mean                & \hlresult{0.27} & 1.59 & 0.28            & 0.54
    \\
\end{tabular}
\end{minipage}
\hspace*{0.05\textwidth}
\begin{minipage}{0.35\textwidth}
    \caption{Average time taken to process each frame in ms on the UZH FPV \cite{2019_delmerico_AreWeReady} dataset for some of the systems in Table \ref{tab:algorithms}.}
    \label{tab:timing-uzhfpv}
    \centering
    \begin{tabular}{c|cccccc}
        Algorithm & \rotatebox{\algorithimRotate}{EqVIO} & \rotatebox{\algorithimRotate}{ROVIO} & \rotatebox{\algorithimRotate}{OpenVINS} & \rotatebox{\algorithimRotate}{VINS-Mono} \\
        \hline
        Source & * & * & * & * \\
        \hline
        indoor\_45\_12         & \hlresult{3.06} & 21.20 & 17.47 & 79.37 \\
        indoor\_45\_13         & \hlresult{3.24} & 28.18 & 18.29 & 54.61 \\
        indoor\_45\_14         & \hlresult{3.53} & 23.85 & 18.53 & 43.16 \\
        indoor\_45\_2          & \hlresult{3.66} & 32.21 & 18.06 & 53.02 \\
        indoor\_45\_4          & \hlresult{3.39} & 24.48 & 18.06 & 58.34 \\
        indoor\_forward\_10    & \hlresult{3.30} & 41.86 & 16.27 & 66.43 \\
        indoor\_forward\_5     & \hlresult{3.33} & 32.77 & 17.35 & 91.71 \\
        indoor\_forward\_6     & \hlresult{3.09} & 43.71 & 17.00 & 45.64 \\
        indoor\_forward\_7     & \hlresult{3.21} & 30.36 & 17.03 & 59.24 \\
        indoor\_forward\_9     & \hlresult{3.02} & 24.34 & 16.20 & 74.21 \\
        \hline
        Mean &                   \textbf{3.28} & 30.30 & 17.43 & 64.02
    \end{tabular}
\end{minipage}
\end{table}

\begin{table}
    \caption{State length parameters of the VIO algorithms (UZH FPV)}
    \label{tab:config-uzhfpv}
    \centering
    \begin{tabular}{c|c|c|c}
        Algorithm & Ref. & \# features & \# states \\
        \hline
        EqVIO & * & 40 & 1 \\
        ROVIO & \cite{2017_bloesch_IteratedExtendedKalman} & 25 & 1 \\
        OpenVINS & \cite{2020_geneva_OpenVINSResearchPlatform} & 50(SLAM) + 200(MSC) & 11 \\
        VINS-mono & \cite{2018_qin_VinsmonoRobustVersatile} & 300 & variable \\
    \end{tabular}
\end{table}

One reason for the faster processing time of EqVIO is the use of fewer features than other algorithms.
Only ROVIO uses fewer features than EqVIO in these experiments, but this is because it uses a direct image difference as its measurement error, which yields more information from each feature at the cost of increased processing time.
While the other algorithms could be made faster by reducing the number of features used, this generally reduces the accuracy of the estimated trajectory, as is also seen to be the case for EqVIO in Section \ref{sec:landmark_experiment}.
While it remains an open question how to compare different VIO algorithms when accounting for different tuning paramaters, the results in this Section have shown that EqVIO is able to achieve competitive accuracy while processing data at much higher rates than other state-of-the-art algorithms.

%%%%%%%%%%%%%%%%%%%%%%%%%%%%%%%%%
\subsection{Verification of Consistency}
\label{sec:consistency}
%%%%%%%%%%%%%%%%%%%%%%%%%%%%%%%%%

In order to verify the consistency of the proposed system, we performed a series of Monte Carlo simulations and computed statistics of the Normalised Estimation Error Squared (NEES) and the filter covariance.
The NEES of the EqF is calculated using the formula
\begin{align*}
    \mathrm{NEES} = \frac{1}{m} \vartheta(\phi(\hat{X}^{-1}, \xi))^\top \Sigma^{-1} \vartheta(\phi(\hat{X}^{-1}, \xi)),
\end{align*}
where $m = 21+3n$ is the dimension of $\vinsT$, $\hat{X} \in \grpG$ is the observer state, $\xi \in \calM$ is the true system state, and $\Sigma \in \Sym_+(21+3n)$ is the EqF Riccati matrix.

\begin{figure}[htb]
    \centering
    \includegraphics[width=0.7\linewidth]{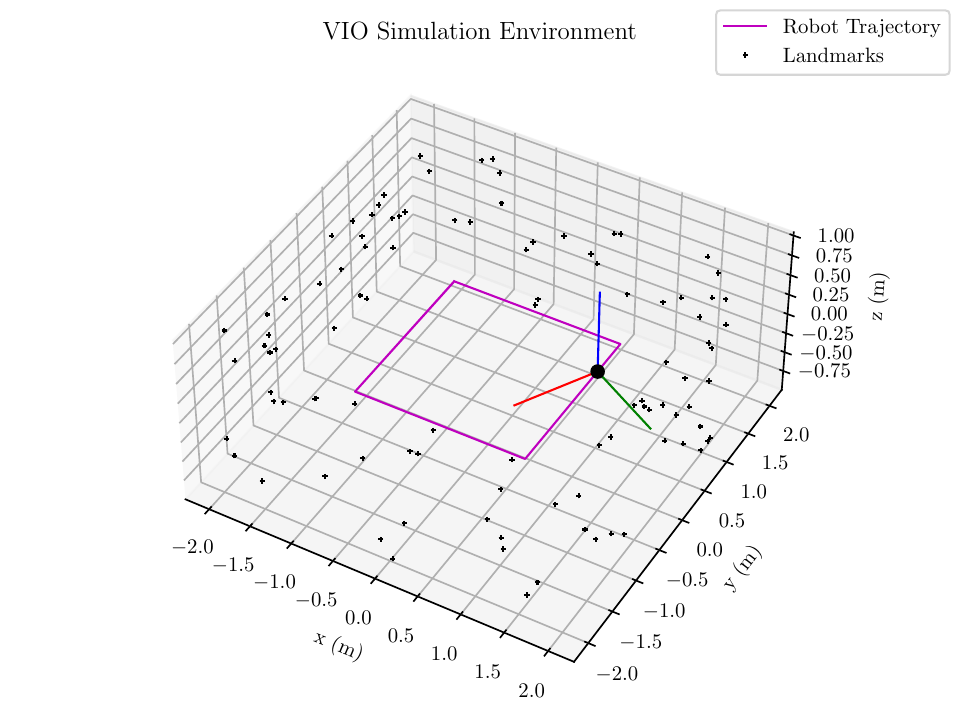}
    \caption{An example of the simulation environment used for experiments to verify consistency.}
    \label{fig:square_pattern}
\end{figure}

In each simulation, the robot trajectory was defined by a square pattern of side length 2 that is flown every 20 seconds.
The landmarks were scattered uniformly on the four vertical walls around the square pattern at a distance of 1~m, with 25 landmarks scattered on each wall.
Figure \ref{fig:square_pattern} shows an example of the true trajectory and the true landmark positions.
Every simulation was run for a total of 200~s, meaning the robot flew around the edges of the square exactly 10 times.
The characteristics of the simulated IMU and camera were chosen as in Table \ref{tab:sensor_chars}.
EqVIO was configured with the corresponding input and output gain matrices, and the state gain matrix was set to zero.
Additionally, outlier rejection was not used, and new landmarks were simply added and removed according to their visibility in the simulation.

Figure \ref{fig:square_nees} shows the resulting statistics of the NEES of EqVIO taken over 1000 Monte Carlo simulations.
NEES statistics are shown for estimates of the full state, the pose, and the attitude.
The top row of the figure shows the experimental and theoretical median values and 95\% confidence bounds of NEES for each time.
In all three subplots the results of the experimental trials closely match the theoretical values.
The bottom row of the figure shows the experimental and theoretical distributions of NEES at the final simulation time of 200~s.
Again, in all three subplots the histogram of experimental values matches the theoretical distribution closely.

\begin{figure*}[htb]
\centering
\includegraphics[width=1.0\linewidth]{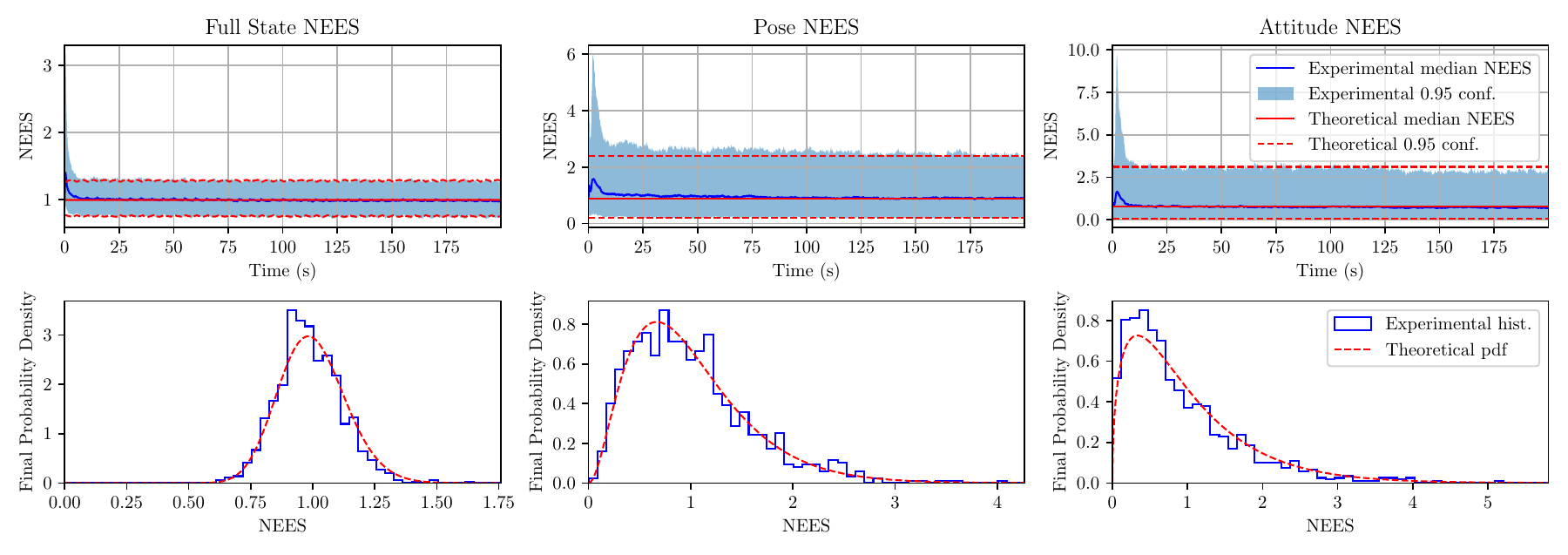}
\caption{The normalised estimation error squared over 1000 Monte Carlo trials.
The top row shows the median and 95\% confidence bounds for both the theoretical and experimental distributions of NEES over time.
The bottom row shows the experimental (represented as a histogram) and theoretical distribution (represented as a smooth curve) of NEES at 200~s.
From left to right, the columns correspond to the the NEES of the full state, the NEES of just the pose estimate, and the NEES of just the attitude estimate.}
\label{fig:square_nees}
\end{figure*}

Figure \ref{fig:square_bounds} shows the evolution of pose error over time for each trial, along with lines indicating three times the standard deviation reported by the filter.
The errors in roll and pitch converge quickly and remain small for the whole simulation time.
The errors in position and yaw grow slowly unbounded over time due to their unobservability associated with the invariance group action as discussed in Section \ref{sec:invariance}.
Figure \ref{fig:square_bounds} demonstrates that the pose error growth is in line with the estimated associated standard deviation.

\begin{figure}
\centering
\includegraphics[width=0.5\linewidth]{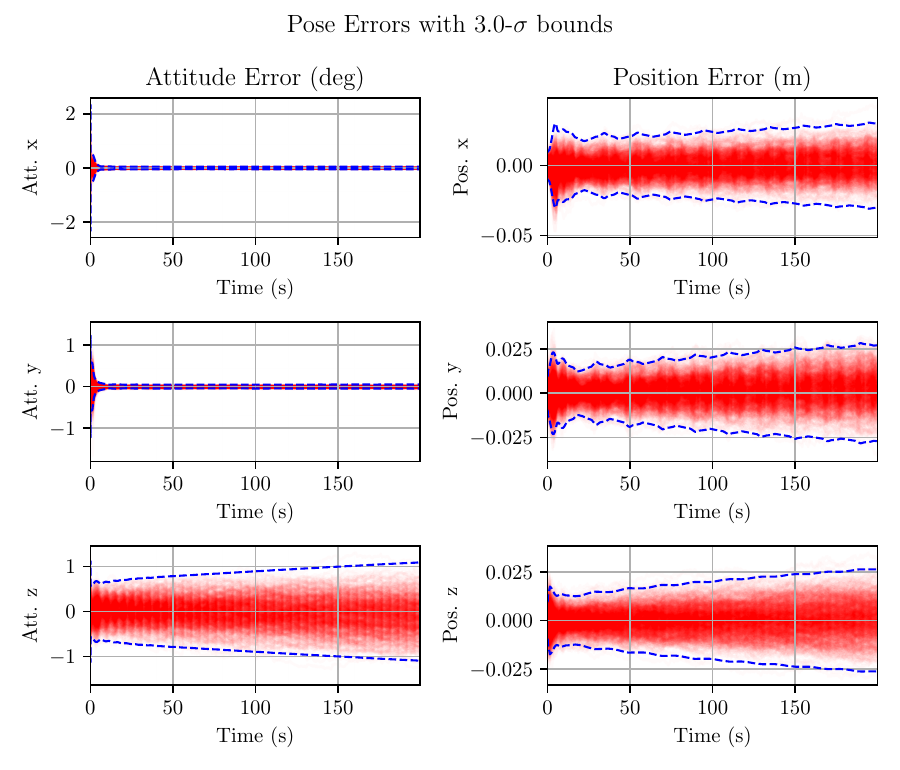}
\caption{The pose error over time for each of the 1000 Monte Carlo trials.
Lines that indicate three times the standard deviation of the filter are also shown and show that the errors do not grow faster than expected.}
\label{fig:square_bounds}
\end{figure}

These results verify that the proposed EqF provides consistent estimates of the true system state.
This is a characteristic shared with other recent works that exploit symmetry for observer design in SLAM and VIO, such as \cite{2016_barrau_EKFSLAMAlgorithmConsistency,2017_wu_InvariantEKFVINSAlgorithm,2018_brossard_InvariantKalmanFiltering,2018_heo_ConsistentEKFBasedVisualInertial}, which also achieve filter consistency by exploiting additional techniques like the first-estimates-Jacobians \cite{2008_huang_AnalysisImprovementConsistency,2020_geneva_OpenVINSResearchPlatform} or the observability-constrained update \cite{2010_huang_ObservabilitybasedRulesDesigning}.
In summary, EqVIO maintains a consistent estimate and covariance of the true state error over time, and the observability of the system states is reflected accurately in the Riccati matrix.

%%%%%%%%%%%%%%%%%%%%%%%%%%%%%%%%%
\subsection{Convergence of Bias and Camera Parameters}
\label{sec:cambias}
%%%%%%%%%%%%%%%%%%%%%%%%%%%%%%%%%

We conducted an additional series of Monte Carlo experiments to verify the convergence of the camera extrinsics and IMU biases to their true values.
In order to ensure observability of the camera extrinsics, in particular, a different trajectory was chosen to the square trajectory used in the consistency experiments in Section \ref{sec:consistency}.
In each simulation, the IMU trajectory was defined by
\begin{align*}
    R_\imu(t) &:= \exp_{\SO(3)} \left(\frac{\pi}{4}
    \begin{pmatrix}
        \cos(0.25 * t) \\
        \cos(-0.3 * t) \\
        \cos(0.2 * t) \\
    \end{pmatrix}^\times
    \right), \\
    x_\imu(t) &:= \frac{1}{2}
    \begin{pmatrix}
         \cos(0.1 \pi t) \\
         \cos(0.2 \pi t) \\
         \cos(0.15 \pi t) \\
    \end{pmatrix}.
\end{align*}
The landmarks were scattered uniformly on the four vertical walls and on the floor and ceiling around the bounds of the trajectory at a distance of 1~m from the bounds of the trajectory, with 20 landmarks scattered on each surface.
Figure \ref{fig:sine_pattern} shows an example of the true trajectory and the true landmark positions.
Every simulation was run for a total of 90~s.
Other than this the simulation configuration was the same as in Section \ref{sec:consistency}.
The characteristics of the simulated IMU and camera were defined as in Table \ref{tab:sensor_chars}.
EqVIO was configured with the corresponding input and output gain matrices, the state gain matrix was set to zero, outlier rejection was not used, and new landmarks included in the state depending on their visibility in the simulation.

\begin{figure}[htb]
    \centering
    \includegraphics[width=0.7\linewidth]{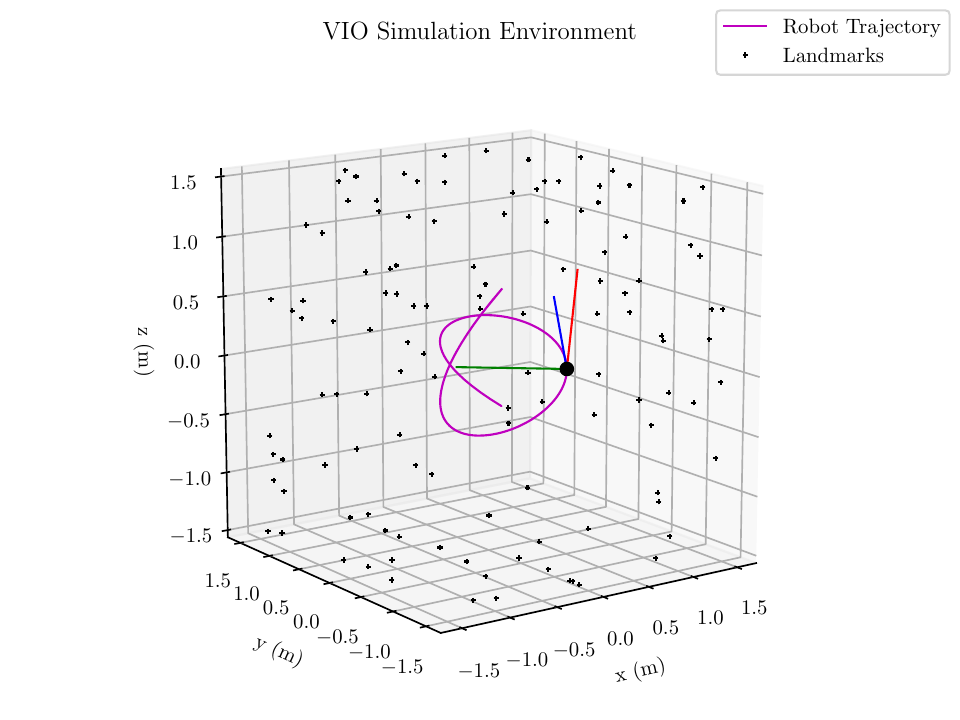}
    \caption{An example of the simulation environment used for experiments to verify camera and bias convergence.}
    \label{fig:sine_pattern}
\end{figure}

Figure \ref{fig:camera_convergence} shows the convergence of the camera extrinsics errors along with bounds indicating three times the standard deviation reported by the filter.
The initial extrinsics rotation errors and translation errors were drawn from zero-mean normal distributions.
The initial standard deviations of rotation error and translation error were chosen to be $\sqrt{5}\times 10^{-2}$~rad and $0.05$~m, respectively.
The figure shows that the camera extrinsics successfully converge from each of these initial errrors, and that the errors are consistent with the covariance reported by the filter.
This verifies the ability of EqVIO to perform online extrinsics calibration.

Figure \ref{fig:bias_convergence} shows the evolution of bias error over time for each trial, along with lines indicating three times the standard deviation reported by the filter.
The initial values of the gyroscope bias and accelerometer bias were drawn from zero-mean normal distributions with standard deviations 0.3~rad/s and 0.1~m/s$^2$, respectively.
The figure shows that the gyroscope and accelerometer biases all converge successfully and remain within the bounds suggested by the filter covariance.

\begin{figure}
    \centering
    \begin{minipage}{0.45\textwidth}
        \centering
        \includegraphics[width=1.0\linewidth]{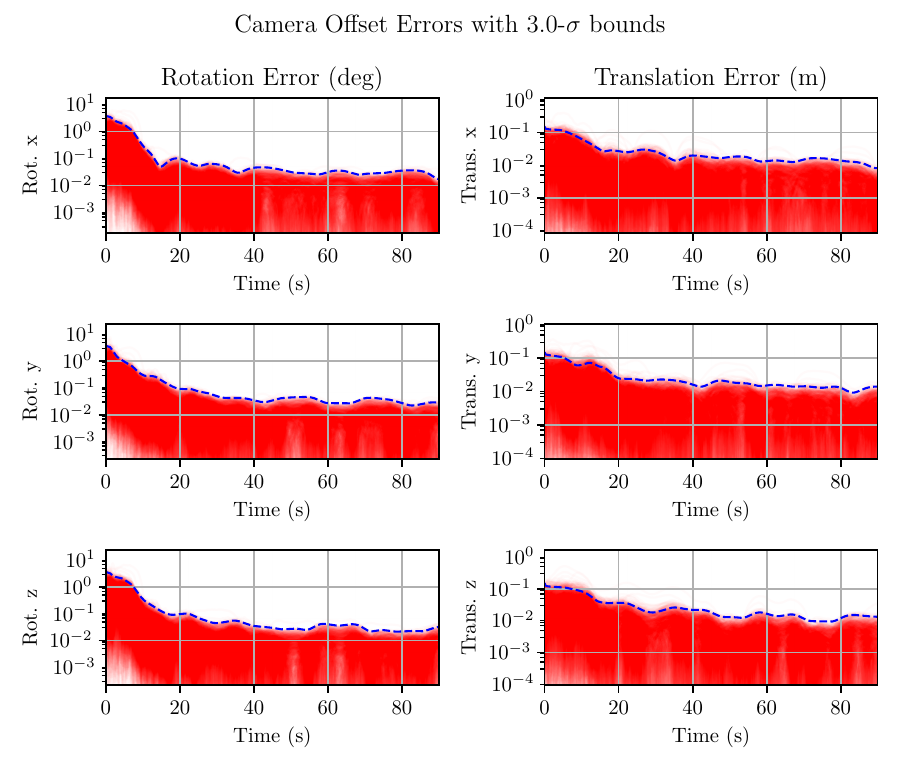}
        \caption{The absolute camera extrinsics errors over time for each of the 1000 Monte Carlo trials.
        Lines that indicate three times the standard deviation reported by the filter are also shown, and the true errors appear consistent with these bounds.}
        \label{fig:camera_convergence}
    \end{minipage}
    \hspace*{0.05\textwidth}
    \begin{minipage}{0.45\textwidth}
        \centering
        \includegraphics[width=1.0\linewidth]{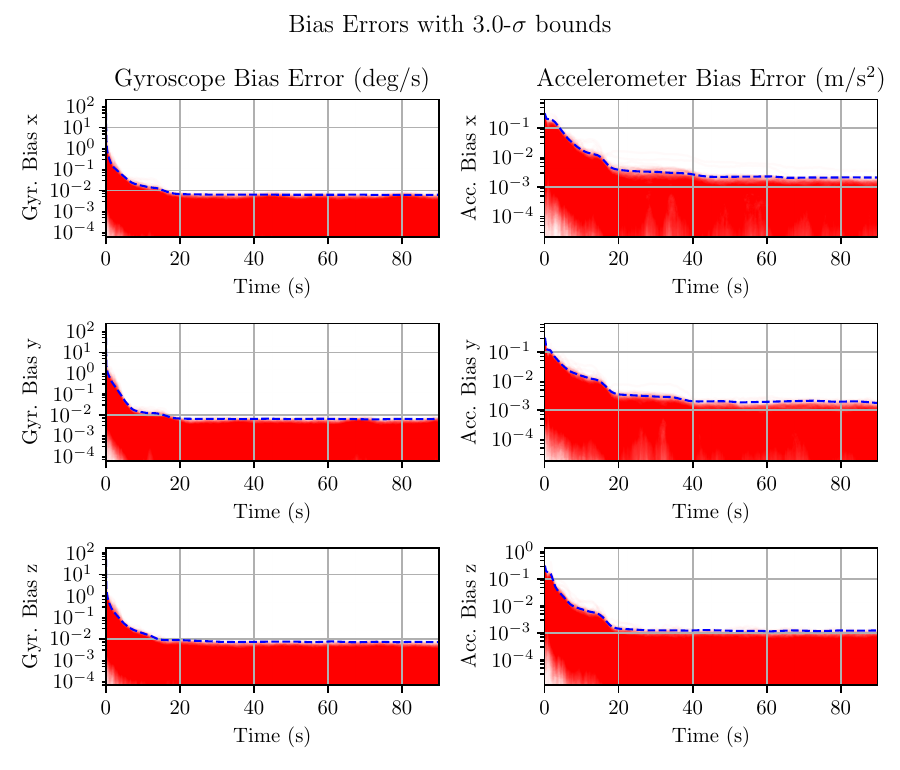}
        \caption{The absolute errors in estimation of IMU biases over time for 1000 Monte Carlo trials.
            The dashed blue lines represent three times the standard deviation as reported by the EqF, and it can be seen that the experimental bias errors are consistent with these.
        }
        \label{fig:bias_convergence}
    \end{minipage}
\end{figure}

The results in this section demonstrate that EqVIO is able to estimate camera extrinsics and IMU biases even with reasonably poor initial estimates.
The camera translation and accelerometer bias are seen to take longer to converge than the camera rotation and gyroscope bias.
This is explained by the inherent observability of VIO: accurate estimation of the camera translation offset depends on the IMU undergoing rotations about at least two axes.
The rates of convergence are also all reflected in the estimated covariance; the errors are consistent with the standard deviations reported by the EqF.
Overall, these results show that EqVIO can be used to accurately calibrate camera offsets and IMU biases online, and that the reported uncertainty of these estimates matches their true distributions.

%%%%%%%%%%%%%%%%%%%%%%%%%%%%%%%%%
\subsection{Example Performance Details}
%%%%%%%%%%%%%%%%%%%%%%%%%%%%%%%%%

\newcommand{\exampleSequence}{EuRoC sequence V2\_01}

In addition to the experiments comparing EqVIO's performance with other state-of-the-art algorithms, we collected data to evaluate and verify the system's performance.
We provide examples of these additional results on the \exampleSequence.

Figure \ref{fig:example_seq_flamegraph} shows the time taken to process every frame of the sequence as a flamegraph, and Figure \ref{fig:example_seq_time_histogram} shows histograms of the time taken for each key step of the system.
These figures show that the time taken by EqVIO is significantly increased when new features need to be identified rather than only tracked.
Nonetheless, the peak time taken for any frame is still comparable to the average processing time of the other algorithms listed in Table \ref{tab:timing-euroc}.

\begin{figure}[!htb]
    \centering
    \begin{minipage}{0.45\textwidth}    
        \includegraphics[width=1.0\linewidth]{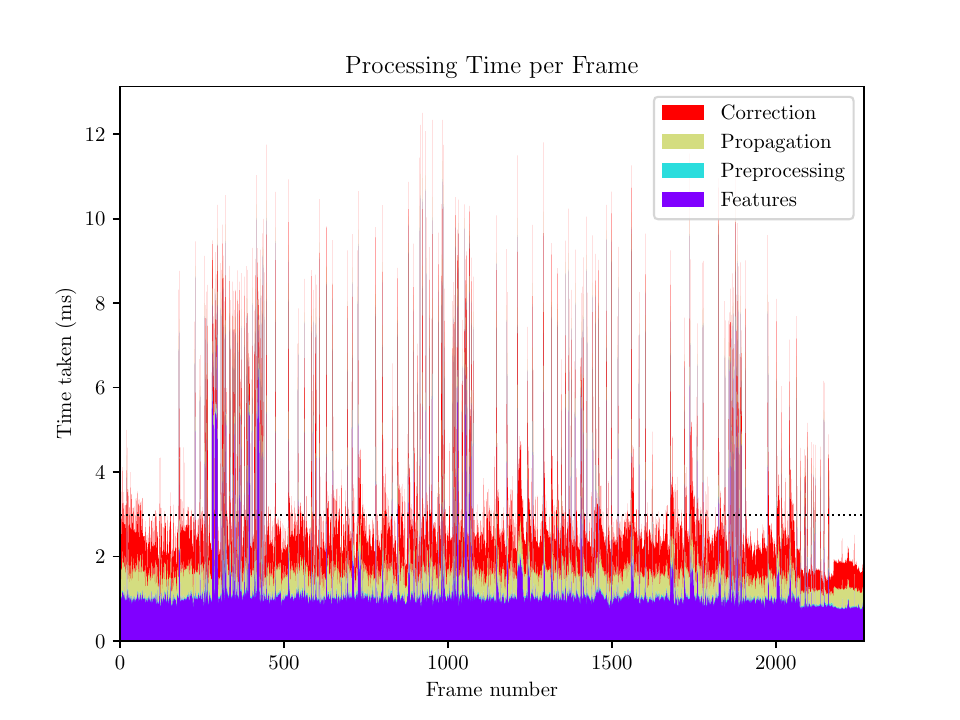}
        \caption{A flame graph showing the time taken to process each frame of the \exampleSequence.}
        \label{fig:example_seq_flamegraph}
    \end{minipage}
    \hspace*{0.05\textwidth}
    \begin{minipage}{0.45\textwidth}
        \includegraphics[width=1.0\linewidth]{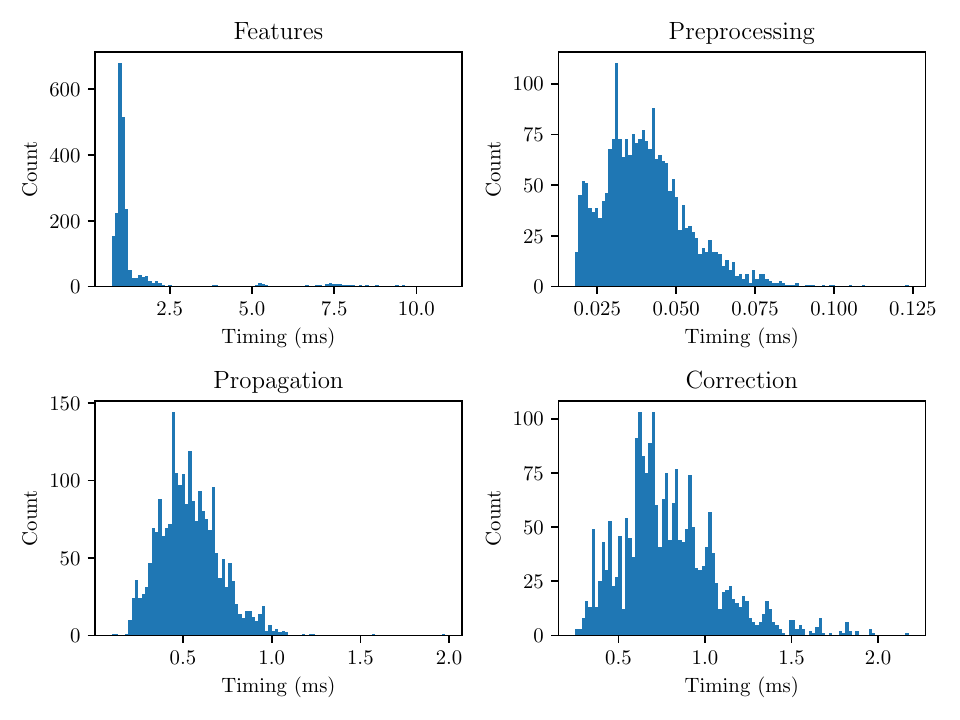}
        \caption{Histograms of the time taken to process each frame by each section of EqVIO in the \exampleSequence.}
        \label{fig:example_seq_time_histogram}
    \end{minipage}
\end{figure}

The absolute position and the attitude about the direction of gravity are unobservable due to the reference frame invariance described in Proposition \ref{prop:invariance_action}.
However, the velocity and direction of gravity with respect to the body-fixed frame are observable, and thus expected to remain free of drift for all time.
Figure \ref{fig:example_seq_gravvel} shows the estimated and true values of the body-fixed gravity and linear velocity $R_\imu^\top v_\imu$ over time.
Clearly the EqF maintains a highly accurate estimate of both the body-fixed gravity direction and velocity over the whole trajectory, and, as expected, no drift is present.

\begin{figure}[!htb]
    \centering
    \includegraphics[width=0.7\linewidth]{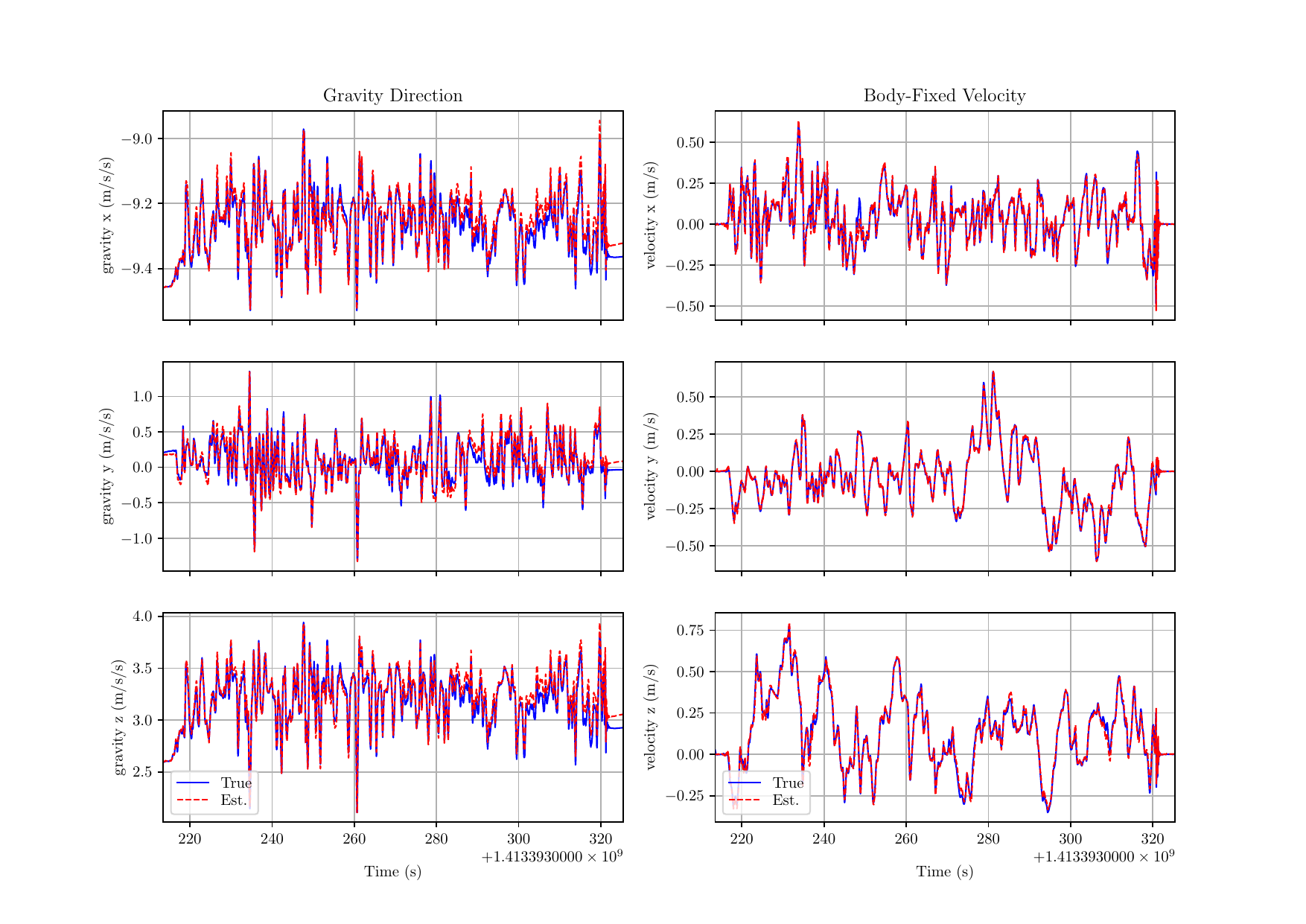}
    \caption{Comparison of estimated and true values of body-fixed velocity and gravity in the \exampleSequence.}
    \label{fig:example_seq_gravvel}
\end{figure}

Figure \ref{fig:example_seq_biases} shows the estimated IMU biases over time.
The estimated gyroscope biases converge quickly and are very stable throughout the sequence, while the estimated accelerometer biases vary significantly over time.
This is associated with observability properties of the VIO problem, and is also reflected in the bias convergence experiments of Section \ref{sec:cambias}.
According to the estimated groundtruth provided with the \exampleSequence, the true gyroscope and accelerometer biases are constant values $b_\imu^\Omega = (-0.002295,0.024939,0.081667)$~rad/s and $b_\imu^a = (-0.023601,0.121044,0.074783)$~m/s$^2$, respectively.
The estimates shown in Figure \ref{fig:example_seq_biases} are all close to these values once converged, with the exception of the $z$-axis of the accelerometer bias.
One factor may be that EqVIO approximates the strength of gravity as $9.80665$~m/s$^2$, rather than using the exact gravity of the room where the dataset was recorded.

\begin{figure}[!htb]
    \centering
    \includegraphics[width=0.7\linewidth]{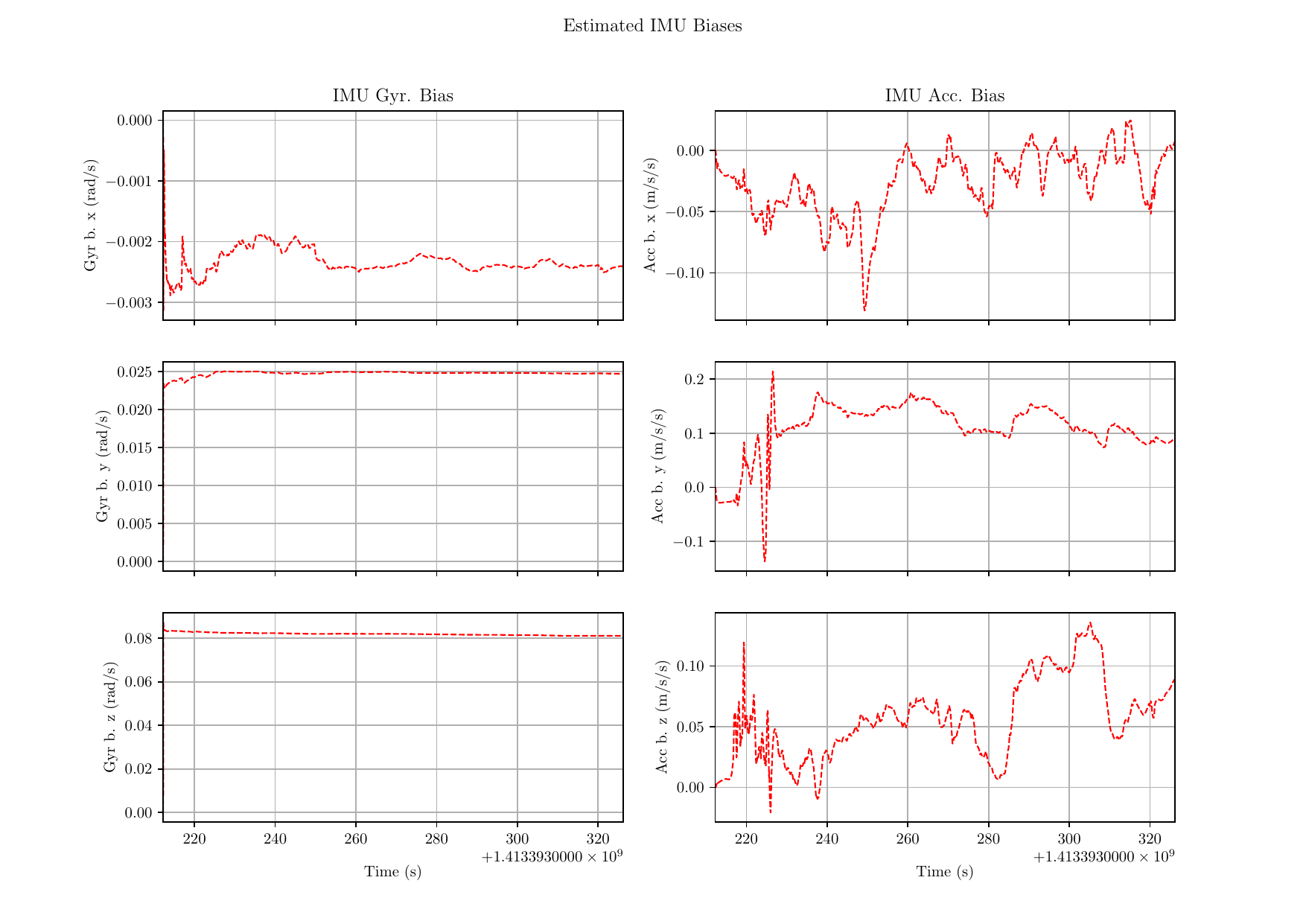}
    \caption{The estimated IMU biases over time in the \exampleSequence.}
    \label{fig:example_seq_biases}
\end{figure}

%%%%%%%%%%%%%%%%%%%%%%%%%%%%%%%%%
\subsection{Number of Landmarks}
\label{sec:landmark_experiment}
%%%%%%%%%%%%%%%%%%%%%%%%%%%%%%%%%

In the experiments carried out on the EuRoC and UZH FPV datasets, EqVIO was restricted to use a maximum of 40 landmarks at any given time.
This is a design choice based on trading off the desired accuracy and processing time performance.
Figure \ref{fig:landmark_experiment} shows how the position RMSE and processing time of EqVIO change for varying numbers of landmarks.
EqVIO was run over the entire EuRoC dataset with a single set of parameters, except that the maximum number of landmarks allowed was changed from 4 to 100 in increments of 2 for each trial.
The processing time per frame and position RMSE were averaged over the whole dataset for each trial.
Due to the similarities between the EqF and an EKF, the processing time indeed appears to increase quadratically with the number of landmarks considered.
In contrast, the position RMSE decreases as the number of landmarks increases, although this trend is less consistent.
The experiment shows that the number of landmarks used in EqVIO may be increased to yield improved accuracy at the cost of increased processing time.
This trade-off is useful in allowing a practitioner to tune the proposed system according to their requirements.

\begin{figure}
    \centering
    \includegraphics[width=0.7\linewidth]{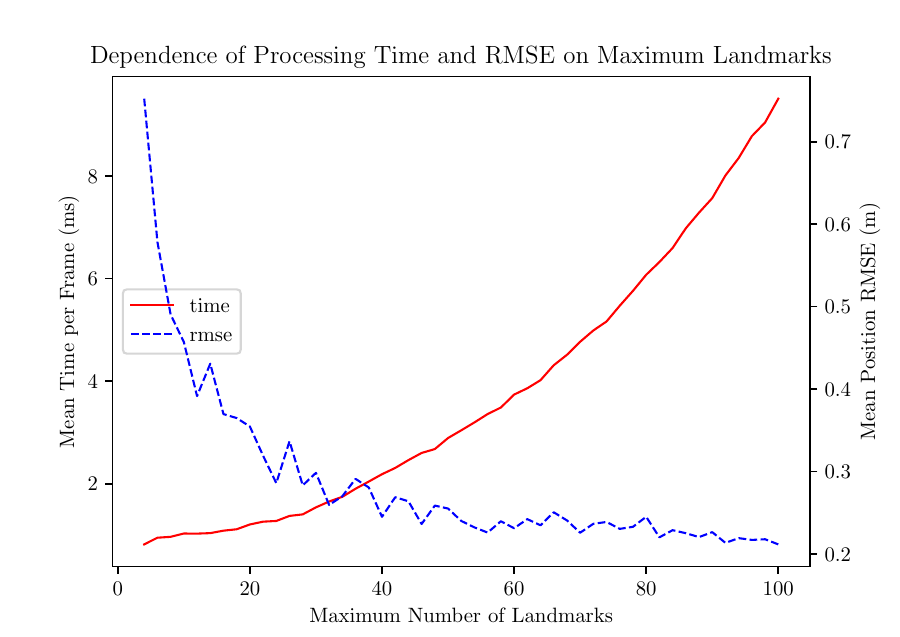}
    \caption{The relationship between the maximum number of landmarks used in EqVIO and the position RMSE and processing time on the EuRoC dataset.}
    \label{fig:landmark_experiment}
\end{figure}

%%%%%%%%%%%%%%%%%%%%%%%%%%%%%%%%%%%%%%%%%%%%%%%%%%
\section{Conclusion}
%%%%%%%%%%%%%%%%%%%%%%%%%%%%%%%%%%%%%%%%%%%%%%%%%%

This paper presents and develops EqVIO: a novel system for visual-inertial odometry based on the recently proposed equivariant filter.
A new Lie group, the VI-SLAM group, is developed for the VIO problem.
It is shown that this symmetry is compatible with the well-known reference frame invariance of VIO, and that therefore the resulting EqF is a naturally consistent estimator.
The VI-SLAM group incorporates the extended special Euclidean group $\SE_2(3)$ proposed in \cite{2014_barrau_InvariantParticleFiltering}, leading to exact linearisation of the error dynamics associated with the navigation states.
The VI-SLAM group also takes advantages of the $\SOT(3)$ symmetry of visual landmarks to enable the EqF to use the higher-order equivariant output approximation \cite{2023_vangoor_EquivariantFilterEqF}.
EqVIO is the system that combines an EqF based on the VI-SLAM group with a feature tracking front-end.
In experimental results on two popular VIO datasets \cite{2016_burri_EuRoCMicroAerial,2019_delmerico_AreWeReady}, we show that EqVIO outperforms other state-of-the-art algorithms in terms of both speed of computation and accuracy of trajectory estimation.
In summary, this paper shows the compatibility of the novel VI-SLAM Lie group with the VIO problem, and that the resulting equivariance-based VIO system significantly outperforms alternative solutions.

%---------------------------------------------------

\section*{Appendix}
%---------------------------------------------------

\begin{proof}[Proof of Proposition \ref{prop:invariance_action}]
Let $S \in \SE_{\eb_3}(3)$, $\xi = (\xi_\imu, b_\imu, T, p_i) \in \vinsT$, and $(\Omega, a) \in \vecL$ be arbitrary.
Then compute
\begin{align*}
    &f_{(\Omega, a)} (\alpha(S, (\xi_\imu, b_\imu, T, p_i))) \\
    &= f_{(\Omega, a)} ((S^{-1} P_\imu, R_S^\top v_\imu, b_\imu), T, S^{-1}(p_i)), \\
    &= (
        (R_S^\top R_\imu) (\Omega - b^\Omega_\imu)^\times,
        (R_S^\top v_\imu),
        (R_S^\top R_\imu) (a - b^a_\imu) + g \eb_3,
        \\ & \hspace{1cm}
        0, 0, 0
    ), \\
    &= (
        R_S^\top (R_\imu (\Omega - b^\Omega_\imu)^\times),
        R_S^\top v_\imu,
        R_S^\top (R_\imu (a - b^a_\imu) + g \eb_3),
        \\ & \hspace{1cm}
        0, 0, 0,
    ), \\
    &= \td \alpha_S f_{\Omega, a} (\xi_\imu, b_\imu, T, p_i),
\end{align*}
where the second-last line follows from $R_S^\top \eb_3 = \eb_3$.
This shows that, indeed, $f$ is invariant with respect to the action $\alpha$.
To show the invariance of $h$, it is sufficient to show the invariance of the component functions $h^k$ defined in \eqref{eq:measurement_function}.
One has that
\begin{align*}
    &h^k(\alpha(S, (\xi_\imu, b_\imu, T, p_i))) \\
    &= h^k (((S^{-1} P_\imu, R_S^\top v_\imu, b^\Omega_\imu, b^a_\imu), T, S^{-1}(p_i))), \\
    &= \pi_{\Sph^2} \left( (S^{-1} P_\imu T)^{-1} (S^{-1}(p_i)) \right), \\
    &= \pi_{\Sph^2} \left( (P_\imu T)^{-1} S S^{-1}(p_k) \right), \\
    &= \pi_{\Sph^2} \left( (P_\imu T)^{-1}(p_k) \right), \\
    &= h^k(\xi_\imu, b_\imu, T, p_i)).
\end{align*}
This completes the proof.
\end{proof}

\begin{proof}[Proof of Lemma \ref{lem:simple_landmark_equivariance}]
It is straightforward to see that $\varphi^\vis$ and $\rho^\vis$ are indeed right group actions.
To see the equivariance of $h^\vis$, let $Q \in \SOT(3)$ and $q \in \R^3 \setminus \{0\}$ be arbitrary.
Then,
\begin{align*}
    h^\vis  (\varphi^\vis(Q, q))
    &= h^\vis ( c_Q^{-1} R_Q^\top q ), \\
    &= \frac{c_Q^{-1} R_Q^\top q}{\vert c_Q^{-1} R_Q^\top q \vert}, \\
    &= c_Q^{-1} R_Q^\top \frac{q}{c_Q^{-1} \vert  R_Q^\top q \vert}, \\
    &= R_Q^\top \frac{q}{\vert q \vert}, \\
    &= \rho^\vis ( Q, h^\vis(q)),
\end{align*}
as required.
\end{proof}

\begin{proof}[Proof of Lemma \ref{lem:state_action}]
Let $\xi = (\xi_\imu, b_\imu, T, p_i) \in \vinsT$ and $X_1 = (A_1, \beta_1, B_1, Q_{1,i}), X_2 = (A_2, \beta_2, B_2, Q_{2,i}) \in \grpG$ be arbitrary.
Then,
\begin{align*}
    &\phi(X_2, \phi(X_1, \xi)) \\
    %%%%%%%%%%%%%%%
    &= \phi(X_2,(\varphi^\imu(A_1, \xi_\imu), b_\imu + \beta_1,
    P_{A_1}^{-1} T B_1,
    \\ &\hspace{1cm}
    P_\imu T B_1 Q_{1,i}^{-1} T^{-1} P_\imu^{-1} (p_i)) ), \\
    %%%%%%%%%%%%%%%
    &= (\varphi^\imu(A_2, \varphi^\imu(A_1, \xi_\imu)),
    b_\imu + \beta_1 + \beta_2,
    P_{A_2}^{-1} (P_{A_1}^{-1} T B_1) B_2,
    \\ &\hspace{1cm}
    P_\imu P_{A_1} P_{A_1}^{-1} T B_1 B_2 Q_{2,i}^{-1} (P_{A_1}^{-1} T B_1)^{-1} (P_\imu P_{A_1})^{-1}
    \\ &\hspace{1cm}
    (P_\imu T B_1 Q_{1,i}^{-1} T^{-1} P_\imu^{-1} (p_i))), \\
    %%%%%%%%%%%%%%%
    &= (\varphi^\imu(A_1 A_2, \xi_\imu),
    b_\imu + (\beta_1 + \beta_2),
    (P_{A_1} P_{A_2})^{-1} T (B_1 B_2),
    \\ &\hspace{1cm}
    P_\imu T B_1 B_2 Q_{2,i}^{-1} B_1^{-1} T^{-1} P_{A_1} P_{A_1}^{-1} P_\imu^{-1}
    \\ &\hspace{1cm}
    P_\imu T B_1 Q_{1,i}^{-1} T^{-1} P_\imu^{-1} (p_i)), \\
    %%%%%%%%%%%%%%%
    &= (\varphi^\imu(A_1 A_2, \xi_\imu),
    b_\imu + (\beta_1 + \beta_2),
    (P_{A_1} P_{A_2})^{-1} T (B_1 B_2),
    \\ &\hspace{1cm}
    P_\imu T B_1 B_2 Q_{2,i}^{-1} Q_{1,i}^{-1} T^{-1} P_\imu^{-1} (p_i)), \\
    %%%%%%%%%%%%%%%
    &= \phi(X_1 X_2, \xi).
\end{align*}
This shows that the compatibility condition \eqref{eq:group_action_compatible} is satisfied.
For the identify condition \eqref{eq:group_action_identity}, compute
\begin{align*}
    &\phi (\id_{\grpG}, \xi) \\
    &= \phi ((I_5, 0, I_4, (I_4)), (\xi_\imu, b_\imu, T, p_i)), \\
    &= (\varphi^\imu(I_5, \xi_\imu), b_\imu + 0,
        I_4 T I_4, P_\imu T I_4 I_4^{-1} T^{-1} P_\imu^{-1} (p_i)), \\
    &= (\xi_\imu, b_\imu, p_i).
\end{align*}
Then, indeed, $\phi$ is a group action.
Finally, to see that $\phi$ is transitive, let $\xi^1, \xi^2 \in \vinsT$ be arbitrary, and let $X = (A, \beta, B, Q_i) \in \grpG$ such that
\begin{align*}
    P_A &= (P_\imu^1)^{-1} P_\imu^2, \\
    v_A &= (R_\imu^1)^\top (v_\imu^1 - v_\imu^2), \\
    \beta &= b_\imu^2 - b_\imu^1, \\
    B &= (P_\imu^1 T^1)^{-1} (P_\imu^2 T^2), \\
    Q_i ((P_\imu^2 T^2)^{-1} p_i^2) &= (P_\imu^1 T^1)^{-1} p_i^2.
\end{align*}
Then it is straightforward to see that $\phi(X, \xi^1) = \xi^2$.
This completes the proof.
\end{proof}

\begin{proof}[Proof of Lemma \ref{lem:compatible_phi_alpha}]
Let $\xi = (\xi_\imu, b_\imu, T, p_i) \in \vinsT$, $S \in \SE_{\eb_3}(3)$ and $X = (A, \beta, B, Q_i) \in \grpG$ be arbitrary.
Then,
\begin{align*}
    &\phi((A, \beta, B, Q_i), \alpha(S, (\xi_\imu, b_\imu, T, p_i)) \\
    &= \phi((A, \beta, B, Q_i), (S^{-1} P_\imu, R_S^\top v_\imu, b_\imu, T, S^{-1}(p_i))), \\
    %%%%%%%%%%%%%%
    &= (S^{-1} P_\imu P_A, R_S^\top v_\imu + R_S^\top R_\imu v_A, b_\imu + \beta, A^{-1} T B,
    \\ &\hspace{1cm}
    S^{-1} P_\imu T B Q_i^{-1} (S^{-1} P_\imu)^{-1} S^{-1}(p_i)
    ), \\
    %%%%%%%%%%%%%%
    &= (S^{-1} (P_\imu P_A), R_S^\top (v_\imu + R_\imu v_A), b_\imu + \beta, A^{-1} T B,
    \\ &\hspace{1cm}
    S^{-1} P_\imu T B Q_i^{-1} P_\imu^{-1}(p_i)
    ), \\
    &= \alpha(S, (P_\imu P_A, v_\imu + R_\imu v_A, b_\imu + \beta, A^{-1} T B,
    \\ &\hspace{1cm}
    P_\imu T B Q_i^{-1} P_\imu^{-1}(p_i)
    )), \\
    &= \alpha(S, \phi(X, \xi)),
\end{align*}
as required.
\end{proof}

\begin{proof}[Proof of Lemma \ref{lem:output_action}]
It is trivial to see that $\rho$ is a group action.
To show the equivariance of $h$, one examines the component measurement functions $h^k$.
Let $X = (A, \beta, B, Q_i) \in \grpG$ and $\xi = (\xi_\imu, b_\imu, T, p_i) \in \vinsT$.
Then one has
\begin{align*}
    &h^k(\phi((A, \beta, B, Q_i), (\xi_\imu, b_\imu, T, p_i))) \\
    &= h^k(\varphi^\imu(A, \xi_\imu), b_\imu + \beta,
    P_A^{-1} T B, P_\imu T B Q_i^{-1} T^{-1} P_\imu^{-1} (p_i)), \\
    &= \pi_{\Sph^2}((P_\imu P_A)^{-1} (P_\imu T B Q_k^{-1} T^{-1} P_\imu^{-1} (p_k))), \\
    &= \pi_{\Sph^2}((P_\imu P_A P_A^{-1} T B)^{-1} (P_\imu T B Q_k^{-1} T^{-1} P_\imu^{-1} (p_k))), \\
    &= \pi_{\Sph^2}(Q_k^{-1} T^{-1} P_\imu^{-1} (p_k)), \\
    &= \frac{c_{Q_k}^{-1} R_{Q_k}^\top T^{-1} P_\imu^{-1} (p_k)}{\vert c_{Q_k}^{-1} R_{Q_k}^\top T^{-1} P_\imu^{-1} (p_k) \vert}, \\
    &= R_{Q_k}^\top \frac{T^{-1} P_\imu^{-1} (p_k)}{\vert T^{-1} P_\imu^{-1} (p_k) \vert}, \\
    &= R_{Q_k}^\top h^k(\xi_\imu, b_\imu, T, p_i).
\end{align*}
It follows that
\begin{align*}
    & h(\phi((A, \beta, B, Q_i), (\xi_\imu, b_\imu, T, p_i))) \\
    &= (h^1(\phi((A, \beta, B, Q_i), (\xi_\imu, b_\imu, T, p_i))), ...,
    \\ &\hspace{2cm}
    h^n(\phi((A, \beta, B, Q_i), (\xi_\imu, b_\imu, T, p_i)))), \\
    &= (R_{Q_1}^\top h^1(\xi_\imu, b_\imu, T, p_1), ...,
    R_{Q_n}^\top h^n(\xi_\imu, b_\imu, T, p_n)), \\
    &= \rho((A, \beta, B, Q_i), h(\xi_\imu, b_\imu, T, p_i)),
\end{align*}
as required.
\end{proof}

% %%%%%%%%%%%%%%%%%%%%%%%%%%%%%%%%%%%%%%%%%%%%%%%%%%
% \section*{Acknowledgment}
% %%%%%%%%%%%%%%%%%%%%%%%%%%%%%%%%%%%%%%%%%%%%%%%%%%

% The authors would like to thank

\bibliographystyle{IEEEtran}
\bibliography{TRO_VIO_2021}

% biography section
%
% If you have an EPS/PDF photo (graphicx package needed) extra braces are
% needed around the contents of the optional argument to biography to prevent
% the LaTeX parser from getting confused when it sees the complicated
% \includegraphics command within an optional argument. (You could create
% your own custom macro containing the \includegraphics command to make things
% simpler here.)
%\begin{IEEEbiography}[{\includegraphics[width=1in,height=1.25in,clip,keepaspectratio]{mshell}}]{Michael Shell}
% or if you just want to reserve a space for a photo:

% Pieter
\parbox[t]{\linewidth}{
\begin{wrapfigure}{L}{\biographyPhotoWidth}
\includegraphics[width=\biographyPhotoWidth]{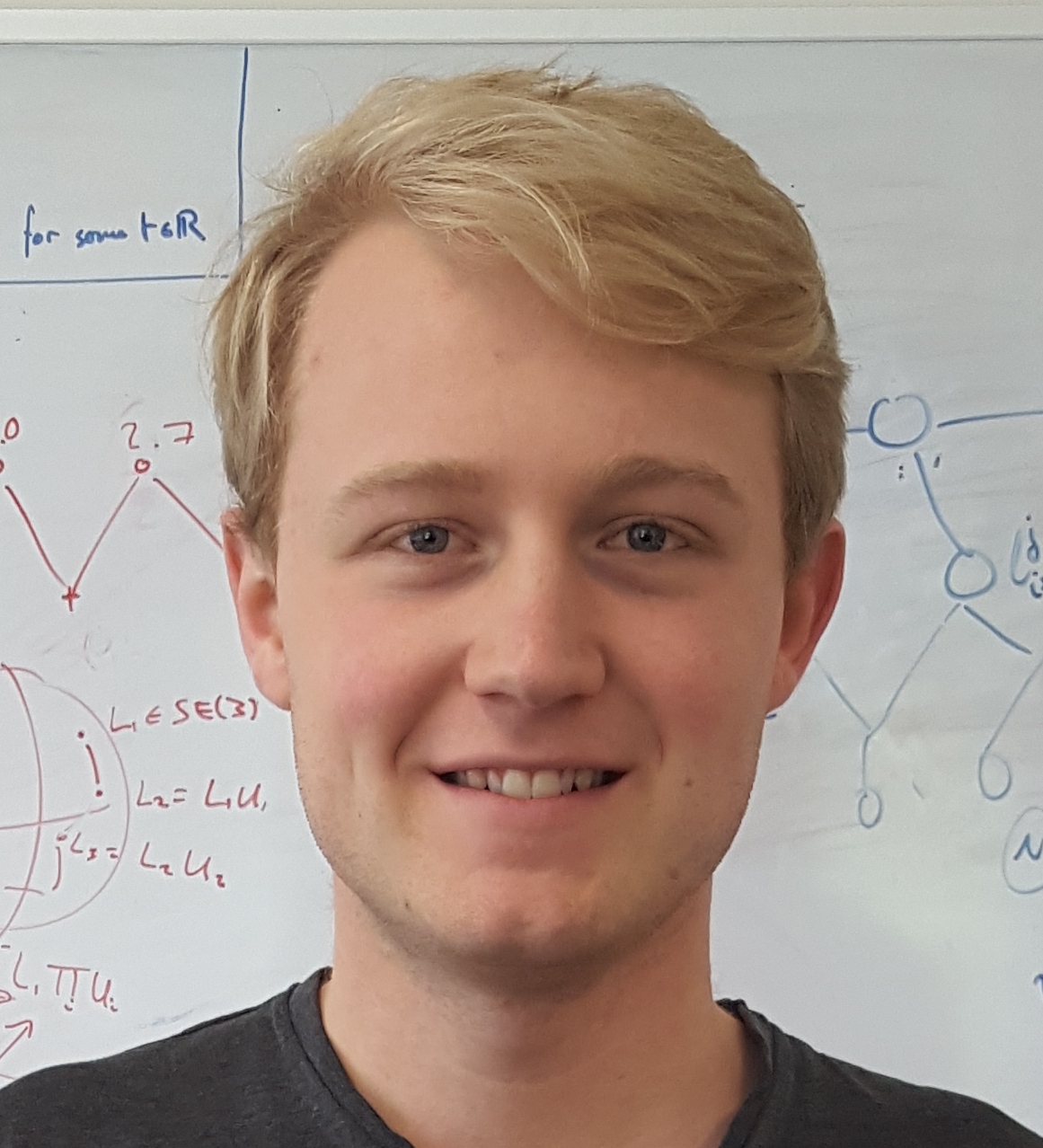}
\end{wrapfigure}
\noindent

\textbf{Pieter van Goor}\
is a PhD student at the Australian National University.
He received his BEng(R\&D) / BSc (Mechatronics and Advanced Mathematics) with first class Honours in 2018 from the Australian National University.
His research interests are in applications of Lie group symmetries and geometric methods to problems in visual spatial awareness.
He is an IEEE student member.
}

\vspace{1.5cm}

% if you will not have a photo at all:

% Rob
\parbox[t]{\linewidth}{
\begin{wrapfigure}{L}{\biographyPhotoWidth}
\includegraphics[width=\biographyPhotoWidth]{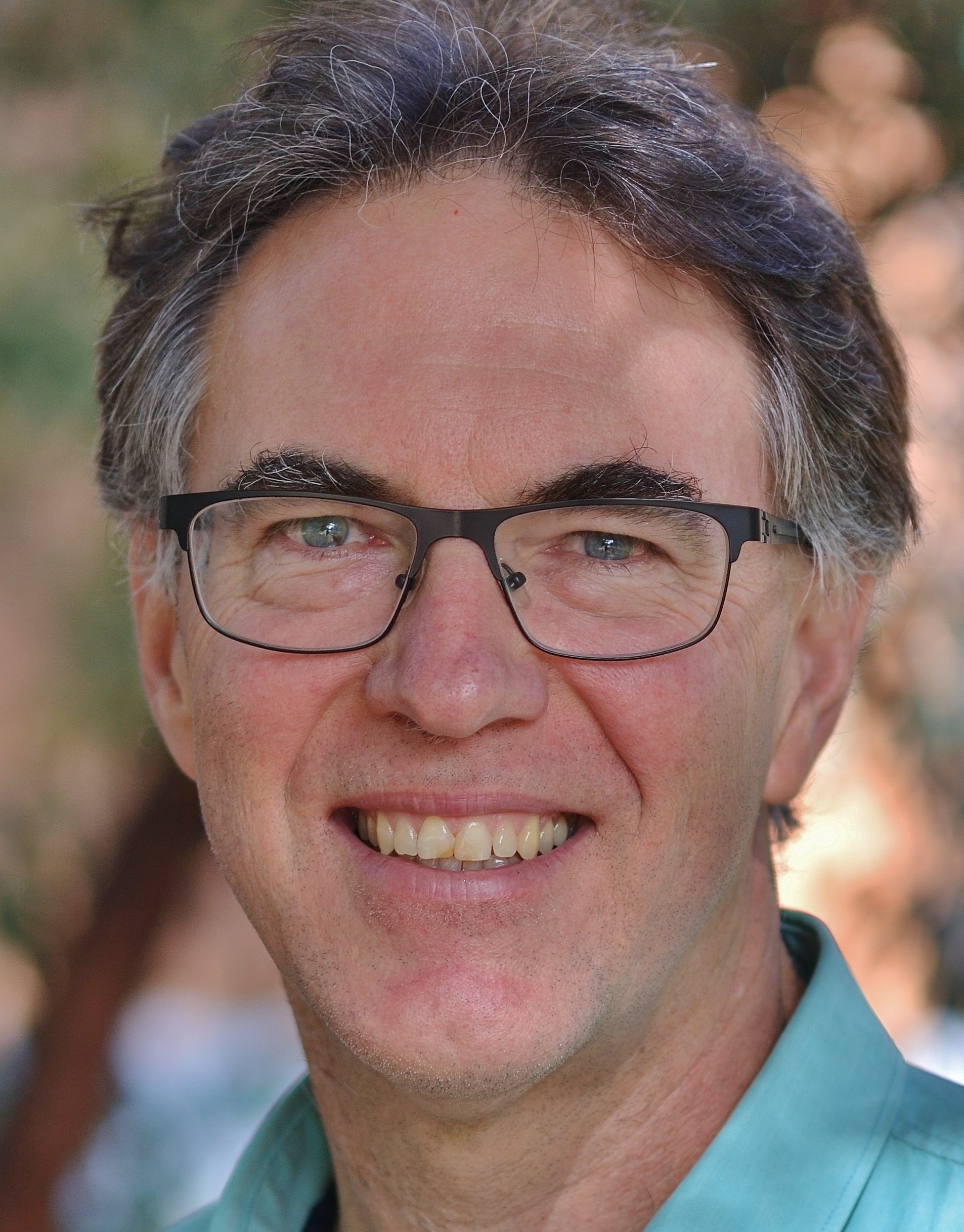}
\end{wrapfigure} \noindent

\textbf{Robert Mahony}%
is a Professor in the School of Engineering at the Australian National University.
He received his BSc in 1989 (applied mathematics and geology) and his PhD in 1995 (systems engineering) both from the Australian National University.
He is a fellow of the IEEE and was president of the Australian Robotics Association from 2008-2011.
He was Director of the Research School of Engineering at the Australian National University 2014-2016.
His research interests are in nonlinear systems theory with applications in robotics and computer vision.
He is known for his work in aerial robotics, equivariant observer design, matrix subspace optimisation and image based visual servo control.
}

\end{document}